\documentclass{article}

    \PassOptionsToPackage{numbers, compress}{natbib}

\usepackage[preprint]{neurips_2026}

\makeatletter
\renewcommand{\@noticestring}{}
\makeatother

\usepackage[utf8]{inputenc} 
\usepackage[T1]{fontenc}    
\usepackage{hyperref}       
\usepackage{url}            
\usepackage{booktabs}       
\usepackage{amsfonts}       
\usepackage{amsthm}
\usepackage{nicefrac}       
\usepackage{microtype}      
\usepackage{xcolor}         
\usepackage{multicol}
\usepackage[table]{xcolor}

\usepackage{graphicx}
\usepackage{soul}
\usepackage{comment}
\usepackage{bm}
\usepackage{amsmath}
\usepackage{subcaption}
\usepackage{wrapfig}
\usepackage{caption}
\usepackage{algorithm}
\usepackage{listings}
\usepackage{xcolor}

\usepackage{multirow}

\definecolor{mfcomment}{RGB}{65,125,130}
\definecolor{mffunc}{RGB}{215,35,115}
\definecolor{mfop}{RGB}{25,60,180}

\lstdefinelanguage{messipseudo}{
  morekeywords=[1]{
    enc,router,experts,mix,clf,cross_entropy,
    mean_router_mass,pairwise_responsibility,
    subset_align,sparse,balance,route_consistency,diversity
  },
  keywordstyle=[1]\color{mffunc},
  morecomment=[l]{\#},
  commentstyle=\color{mfcomment},
  literate=
    {*}{{{\color{mfop}*}}}1
    {+}{{{\color{mfop}+}}}1
    {-}{{{\color{mfop}-}}}1
    {=}{{{\color{mfop}=}}}1
}

\lstdefinestyle{meanflowlike}{
  language=messipseudo,
  basicstyle=\ttfamily\small,
  columns=fullflexible,
  keepspaces=true,
  showstringspaces=false,
  numbers=none,
  frame=none,
  xleftmargin=0.2em,
  aboveskip=2pt,
  belowskip=2pt,
  framexleftmargin=0pt,
  framexrightmargin=0pt,
  framesep=3pt
}

\usepackage{amsmath,amsfonts,bm}

\def\eqref#1{equation~\ref{#1}}

\def\1{\bm{1}}

\DeclareMathAlphabet{\mathsfit}{\encodingdefault}{\sfdefault}{m}{sl}
\SetMathAlphabet{\mathsfit}{bold}{\encodingdefault}{\sfdefault}{bx}{n}

\newcommand{\ms}[1]{\ensuremath{\pm #1}}

\newtheorem{theorem}{Theorem}[section]
\newtheorem{proposition}[theorem]{Proposition}
\newtheorem{lemma}[theorem]{Lemma}
\newtheorem{corollary}[theorem]{Corollary}
\newtheorem{definition}[theorem]{Definition}

\title{Learning Subset-Shared Invariances for \\ Domain Generalization with Mixture-of-Experts}

\author{%
Tien-Hung Nguyen\thanks{Co-first Authors.},
Tien-Dat Tran\footnotemark[1],
M.-Duong Nguyen,
Kok-Seng Wong\thanks{Corresponding Author: \url{wong.ks@vinuni.edu.vn}}
\\
VinUniversity, Vietnam
\\
\texttt{\{26hung.nt, duong.nm2, dat.tt6, wong.ks\}@vinuni.edu.vn}
}

\begin{document}

\maketitle

\begin{abstract}
Domain generalization (DG) aims to learn a model from one or more source domains that generalizes to an unseen target domain without accessing target data during training. A common approach enforces invariance of representations across all source domains, assuming predictive structure is globally shared. However, we demonstrate that enforcing invariance across more domains gradually restricts the feasible representation space, discarding transferable predictive factors that are not universally shared. To address this limitation, we propose subset-shared invariance, where predictive structure is assumed stable only within domain subsets. We implement this principle with a mixture-of-experts architecture, where each expert aligns the specific domains it serves and a routing mechanism composes subset-invariant components for prediction. This creates a routing-conditioned invariance, jointly learned with the representation. To facilitate effective decomposition, we develop training objectives that encourage selective alignment, confident and balanced routing, and diverse expert specialization. 
Experiments on DomainBed benchmarks demonstrate improved out-of-domain generalization and greater robustness under increasing domain heterogeneity. Our results suggest that DG should move beyond enforcing a single global invariance and instead model invariance through partially shared structure across domain subsets. 
\end{abstract}

\section{Introduction}

Domain generalization (DG) refers to the challenge of training models that perform well on unseen domains, without having access to data from these target domains during training \cite{Muandet2013DomainGV, Gulrajani2020InSO, koh2021wilds}. A widely used approach in DG is to develop domain-invariant representations by ensuring that the conditional distribution $P(Z \vert Y)$ remains consistent across different domains. This idea is central to many methods, including feature alignment \cite{Sun2016DeepCC, Peng2018MomentMF} and invariant learning frameworks such as invariant risk minimization (IRM) \cite{arjovsky2019invariant} and its extensions \cite{Ram2021FishrIG, Shi2021GradientMF}. Nonetheless, applying invariance uniformly to all domains may be too restrictive (see Figure~\ref{fig:intro_teaser}). In real-world settings, domains may only partially overlap in structure. For instance, specific features could be stable in some domains but vary in others \cite{Piratla2020EfficientDG, Yao2023ImprovingDG}. Therefore, imposing global invariance can eliminate useful predictive information, especially when domain shifts are heterogeneous \cite{Geirhos2021PartialSI, Wang2024lost}. This observation indicates that predictive structure is not universally shared, but instead distributed among subsets of domains. Similar patterns of subset-based heterogeneity are also found in multimodal learning, where individual samples may use different modality combinations \citep{yun2024flexmoe}.

To formalize this limitation, we analyze the effect of enforcing invariance across an increasing number of domains and show that it progressively reduces the mutual information between learned representations and labels. This reveals an inherent trade-off between invariance and predictive information. As invariance constraints become stronger, the representation becomes less informative for prediction. This observation suggests that DG should not enforce invariance globally, but should instead selectively preserve predictive structure that is stable only within subsets of domains.

We therefore argue that invariance in DG should be \emph{structured rather than global}. Specifically, we introduce \emph{subset-shared invariance}, where invariant structure is assumed to hold within subsets of domains rather than across all domains. This reframes DG as a structured decomposition problem. Instead of learning a single invariant representation, the model should capture multiple invariant components, each corresponding to a subset of domains, and combine them adaptively for prediction.

To instantiate this principle, we leverage a mixture-of-experts (MoE) architecture \cite{Jacobs1991AdaptiveMO, Shazeer2017OutrageouslyLN} as a mechanism for learning such a decomposition. A routing function induces a soft partition over inputs, which in turn defines subsets of domains associated with each expert. 
\begin{wrapfigure}{r}{0.45\textwidth} 
  \centering
  \includegraphics[width=\linewidth]{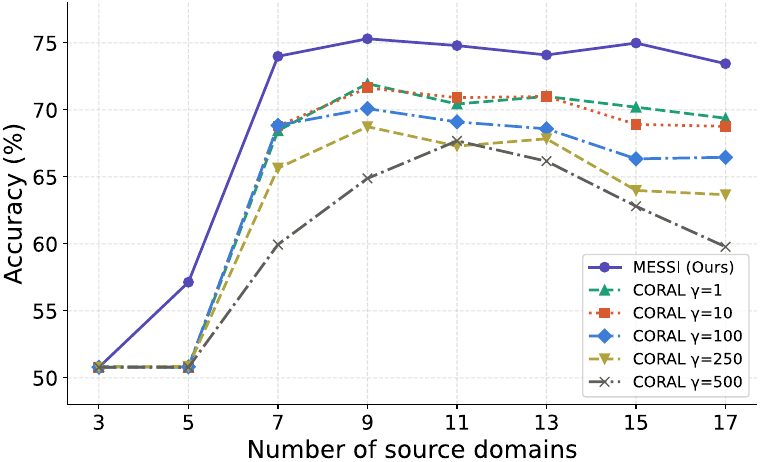}
  \caption{
  Test accuracy changes as the number of source domains increases under a fixed training budget. Adding more domains at first improves performance, but after a certain point, accuracy begins to decline. This pattern shows an important limitation of enforcing global invariance. When more heterogeneous domains are included, the shared invariant signal weakens and valuable predictive information is lost. In contrast, MESSI stays robust, implying that preserving subset-specific structure is crucial for generalization when domains differ significantly.
  }
  \label{fig:intro_teaser}
\end{wrapfigure}
We enforce invariance only within the subsets defined by routing, allowing the model to learn in which domains invariance is appropriate. This results in a \emph{routing-induced invariance}, where the model learns \emph{where} invariance should hold, rather than enforcing it uniformly across all domains.

MoE models have been used in DG to boost both capacity and specialization \cite{Li2022SparseMA, Chen2024LFMEAS, Dai2021GeneralizablePR}. However, earlier work relies on experts to act as feature extractors or domain-specific modules. Our method instead uses routing to directly shape how invariance is applied, so that expert specialization helps find invariant relationships specific to certain subsets of domains.

To operationalize this idea, we propose a subset-conditioned invariance objective that aligns class-conditional feature distributions only across domain pairs selected by routing. This alignment is implemented using an optimal transport formulation weighted by expert-specific routing mass, which induces soft domain assignments. To encourage meaningful decomposition, we introduce a diversity regularizer across experts, along with routing objectives that promote both confident (instance-level sparse) and balanced (dataset-level uniform) expert utilization.
We evaluate our approach on standard DG benchmarks and demonstrate consistent improvements over strong baselines. Further analysis shows that the learned routing patterns correspond to meaningful domain groupings, supporting the hypothesis that invariant structure is inherently subset-dependent rather than globally shared.
Our main contributions are as follows. 
\begin{itemize}
    \item We identify a fundamental limitation of global invariance in DG and show that enforcing invariance across increasing numbers of domains can reduce predictive information. Based on this insight, we introduce \emph{subset-shared invariance}, which reframes DG as learning structured, subset-dependent invariances rather than a single globally invariant representation.
    \item We propose a routing-based MoE framework that learns where invariance should hold. By using routing to induce a data-driven decomposition of domain relationships, our method enforces invariance selectively across domain subsets, enabling adaptive composition of invariant components.
    \item We demonstrate that the proposed approach improves out-of-domain generalization on standard DG benchmarks, particularly under heterogeneous domain shifts, and provides robustness as the number of source domains increases.
\end{itemize}

\section{Related Work}

\textbf{Domain Generalization and Invariant Representation Learning.} DG is commonly approached by learning representations that are invariant across domains, typically by enforcing the conditional distribution $P(Z \vert Y)$ to be consistent across environments \cite{Muandet2013DomainGV,Fang2013UnbiasedML,Li2018DeepDG}. This principle underlies a wide range of methods, including feature alignment approaches \cite{Sun2016DeepCC,Peng2018MomentMF, Li2018DomainGW} and invariant learning frameworks such as IRM and its variants \cite{arjovsky2019invariant,Ram2021FishrIG,Shi2021GradientMF}. Subsequent work has explored different mechanisms to approximate invariance, including data augmentation \cite{Shankar2018GeneralizingAD,Zhou2021DomainGW}, episodic and meta-learning strategies \cite{Balaji2018MetaRegTD,Dou2019DomainGV}, and self-supervised learning \cite{Carlucci2019DomainGB,lv2022causality}. More recent approaches refine invariance through causal reasoning \cite{Mahajan2020DomainGU,lv2022causality,Yin2024IntegratingMB,He2025LearningTC,Li2025TowardsSD}, disentangled representations \cite{Zhang2021TowardsPD,Cheng2024DisentangledPR,Pan2025MinimalSS}, and indirect alignment objectives \cite{Wei2025IndirectAA}. Despite their differences, these approaches largely share the assumption that a single representation can be made invariant across all domains. In contrast, we argue that invariant structure is often only partially shared, and should be modeled at the level of domain subsets.

\textbf{Optimization and Structural Constraints.} A complementary line of work studies DG from an optimization perspective, emphasizing that generalization depends not only on representation invariance but also on the solutions reached during training. Methods in this direction improve robustness through gradient regularization \citep{Ram2021FishrIG,Shi2021GradientMF,Nguyen2025FederatedDG,do2025domain}, geometry-aware optimization, and flatness constraints \cite{Cha2021SWADDG,Wang2023SharpnessAwareGM,Mansilla2021DomainGV}. More recent approaches explicitly shape training trajectories \cite{Ballas2025GradientGuidedAF,Cho2025OneStepGR,zhao2026unlearning} or constrain interference with pretrained knowledge \cite{Hu2024LearnTP,Yun2024SoMASV,Javed2024QTDoGQT,Wu2024DynamicST}. While these methods improve generalization through optimization dynamics, they do not explicitly address how invariant structure should be distributed across domains. Our work is complementary, focusing on the structure of invariance itself rather than the optimization process.

\textbf{Modular Experts and Parameter-Efficient Adaptation.} Recent DG work has moved toward modular architectures instead of a single shared model. Expert-based methods, such as sparse experts \cite{Li2022SparseMA,Chen2025PointMoELM} and multi-expert distillation \cite{Chen2024LFMEAS}, use specialized modules to capture transferable patterns. Parameter-efficient approaches such as prompt-based \cite{Wen2025DomainGI,Li2025GeneralizingVM,Li2024PromptDrivenDO}, adapter-based \cite{JiCustomizingDA}, low-rank \cite{Li2024FlatLoRALA}, multimodal \cite{Xu2026ReasoningDrivenML}, and regularization-based \cite{Cha2022DomainGB,Wei2023StrongerF,Cho2025PEERPM,Ni2024PACEMG} are focusing on targeted adaptation. While these methods improve robustness through specialization, they rarely model predictive factors shared only by subsets of domains or use routing for invariance. Recent MoE-based work, such as FlexMoE~\citep{yun2024flexmoe}, addresses subset-shared structure in multimodal settings with predefined indices, but requires explicit subset labels that are usually unavailable in DG.

\textbf{Difference from prior work.} Unlike prior expert-based or parameter-efficient DG methods, which primarily exploit specialization or constrained adaptation, our approach by contrast, models which domains should be aligned. We introduce \emph{subset-shared invariance} and use routing-conditioned expert alignment, so that experts capture predictive factors stable within only certain domain subsets. This avoids forcing all information into a single invariant representation. Unlike FlexMoE, which require predefined subset structures, our method discovers subset structure directly from data. The routing mechanism adaptively determines which domains to align, enabling data-driven decomposition of invariant structure without explicit labels for subset membership.

\section{Rethinking Feature Invariance in Domain Generalization}\label{sec:subset_shared}
We revisit the concept of feature invariance in DG. We show that enforcing invariance across an increasing number of domains progressively imposes stronger constraints on the learned representation, leading to a monotonic shrinkage of the invariant feature subspace. Consequently, globally invariant representations may discard predictive information that is shared only across subsets of domains.

\subsection{Problem Setup}
We consider the multi-source DG setting with $K$ source domains and no access to target-domain data during training.
Formally, let $X \in \mathcal{X}$ denote the input, $Y \in \mathcal{Y}$ the label, and $D \in \{1,\dots,K\}$ the domain index, where $\{\mathcal{D}_k\}_{k=1}^{K}$ are the source-domain distributions over $(X,Y)$.
A predictor is parameterized as $\hat y = f_\theta(z)$ with representation $z = g_\phi(x)$, where $g_\phi : \mathcal{X} \to \mathbb{R}^d$ is a feature extractor and $f_\theta$ is a classifier.
Training then minimizes the empirical risk across source domains \citep{Muandet2013DomainGV,Gulrajani2020InSO,Vapnik1991PrinciplesOR}:
\begin{align}
\mathcal{L}_{\mathrm{cls}}
=
\frac{1}{K}\sum_{k=1}^{K}
\mathbb{E}_{(x,y)\sim\mathcal{D}_k}
\big[\ell(f_\theta(g_\phi(x)),y)\big].
\label{eq:cls_loss}
\end{align}
\subsection{Monotonic Shrinkage in Globally Learned Invariance}
To analyze the multi-DG problem, we first reformulate the general DG optimization problem as follows. Let $Z = g_\phi(X)$ denote the learned representation. For a subset of source domains
$\mathcal{K} \subseteq \{1,\dots,K\}$, define the pairwise index set $\mathcal{P}(\mathcal{K})
= \{(i,j)\in \mathcal{K}\times\mathcal{K} : i<j\}.$
We consider the following information-theoretic objective:
\begin{align}
\max_{\phi}~\mathcal{J}_{\mathcal{K}}(\phi)
=
I(Z;Y)
-
\lambda
\sum_{(i,j)\in \mathcal{P}(\mathcal{K})}
I\!\left(Z;D
\,\middle|\,
Y, D\in\{i,j\}
\right),
\qquad \lambda > 0.
\label{eq:pairwise_cmi_objective}
\end{align}
Theoretically, the first term encourages label-predictive representations, while the second term penalizes domain-discriminative information within each domain pair after conditioning on the label.
We denote the optimal value by  $V_{\mathcal{K}} = \sup_{\phi}\, \mathcal{J}_{\mathcal{K}}(\phi)$. 
From the above objective function, we have the following propositions.
\begin{proposition}\label{prop:domain-expansion}
Let $\mathcal{K}\subseteq \mathcal{K}'$. Then, for every $\phi$, we have $\mathcal{J}_{\mathcal{K}'}(\phi)
\le \mathcal{J}_{\mathcal{K}}(\phi).$
Consequently, the following inequality holds: $V_{\mathcal{K}'} \le V_{\mathcal{K}}$.
\end{proposition}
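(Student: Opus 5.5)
The argument is a direct comparison of the two objectives at a fixed encoder, followed by taking suprema; no optimization-specific reasoning is needed.

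First I fix an arbitrary $\phi$. The first term $I(Z;Y)$ depends only on $\phi$ through $Z=g_\phi(X)$ and on the pooled joint law of $(X,Y)$. It is therefore identical in $\mathcal{J}_{\mathcal{K}}(\phi)$ and $\mathcal{J}_{\mathcal{K}'}(\phi)$. I will state this explicitly, noting that the pooled distribution over source domains is fixed by the problem setup and does not depend on the chosen index set. Consequently,
\begin{align*}
\mathcal{J}_{\mathcal{K}}(\phi)-\mathcal{J}_{\mathcal{K}'}(\phi)
=\lambda\sum_{(i,j)\in \mathcal{P}(\mathcal{K}')\setminus\mathcal{P}(\mathcal{K})} I\!\left(Z;D \,\middle|\, Y, D\in\{i,j\}\right).
\end{align*}

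Second, I verify the set inclusion used in that difference. If $\mathcal{K}\subseteq\mathcal{K}'$, then any pair $(i,j)$ with $i<j$ and $i,j\in\mathcal{K}$ also lies in $\mathcal{K}'\times\mathcal{K}'$, so $\mathcal{P}(\mathcal{K})\subseteq\mathcal{P}(\mathcal{K}')$. Each pairwise term depends only on the pair $(i,j)$ and on $\phi$, not on the ambient index set, so the common terms cancel exactly.

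Third, every remaining summand is a conditional mutual information, hence nonnegative. This follows because it is a KL divergence between $P(Z,D\mid Y, D\in\{i,j\})$ and the product of its conditional marginals, averaged over $Y$. Since $\lambda>0$, the difference above is $\ge 0$, which gives $\mathcal{J}_{\mathcal{K}'}(\phi)\le\mathcal{J}_{\mathcal{K}}(\phi)$ for every $\phi$.

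Finally, taking the supremum over $\phi$ on both sides preserves the inequality, so $V_{\mathcal{K}'}\le V_{\mathcal{K}}$.

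The only delicate point is well-definedness. I need the mutual information terms to be finite, or at least that $I(Z;Y)$ is finite, to avoid $\infty-\infty$. Since $Y$ is discrete, $I(Z;Y)\le H(Y)<\infty$, which settles it. I also need the conditioning events $\{D\in\{i,j\}\}$ to have positive probability, which I will assume each source domain carries. With these checks, the remaining steps are routine.
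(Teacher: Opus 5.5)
Your proof is correct and follows the paper's argument essentially step for step. Both use the inclusion $\mathcal{P}(\mathcal{K})\subseteq\mathcal{P}(\mathcal{K}')$, the nonnegativity of the extra conditional mutual information terms together with $\lambda>0$, and then take suprema. Your well-definedness remarks ($I(Z;Y)\le H(Y)<\infty$, positive mass of each event $\{D\in\{i,j\}\}$) are not in the paper but are harmless refinements; each pairwise term is also at most $\log 2$, since $D$ takes only two values on that event, so no $\infty-\infty$ issue can arise in your difference identity.
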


\begin{proposition}\label{prop:pairwise-invariance}
For any $(i,j)\in\mathcal{P}(\mathcal{K})$, $I\!\left(Z;D \,\middle|\, Y,\; D\in\{i,j\} \right) = 0$ if and only if the conditional distributions satisfy $P(Z\vert Y,D=i) = P(Z\vert Y,D=j)$.
\end{proposition}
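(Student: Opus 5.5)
The plan is to write the conditional mutual information as an expected Kullback--Leibler divergence and then use that KL vanishes exactly when its two arguments coincide. Throughout, we work under the conditional law $Q := P(\,\cdot \mid D\in\{i,j\})$. Let $\pi_i = Q(D=i\mid Y=y)$ and $\pi_j = 1-\pi_i$. Under $Q$ we then have
\begin{align*}
I\!\left(Z;D \,\middle|\, Y,\, D\in\{i,j\}\right)
&= \mathbb{E}_{Y}\Big[\mathrm{KL}\big(Q(Z,D\mid Y)\,\big\|\,Q(Z\mid Y)\,Q(D\mid Y)\big)\Big] \\
&= \mathbb{E}_{Y}\Big[\textstyle\sum_{d\in\{i,j\}} \pi_d\,\mathrm{KL}\big(P(Z\mid Y,D=d)\,\big\|\,M_Y\big)\Big],
\end{align*}
where $M_Y = \pi_i P(Z\mid Y,D=i)+\pi_j P(Z\mid Y,D=j)$ is the mixture. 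We use here that conditioning on $D=d$ inside the event $\{D\in\{i,j\}\}$ gives back the original class-conditional law $P(Z\mid Y,D=d)$. This identity is the standard generalized Jensen--Shannon representation of mutual information with a binary variable.

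For the ``if'' direction, assume $P(Z\mid Y,D=i)=P(Z\mid Y,D=j)$ for ($Q$-almost) every $y$. Then $M_Y$ equals this common law, so every KL term is zero and the conditional mutual information is $0$. Equivalently, $Z\perp D\mid Y$ under $Q$.

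For the ``only if'' direction, the quantity is an expectation of nonnegative terms, so it is zero only if, for $Q$-almost every $y$, each term $\pi_d\,\mathrm{KL}(P(Z\mid Y{=}y,D{=}d)\,\|\,M_y)$ vanishes. Gibbs' inequality (KL is zero iff the two measures are equal) then gives $P(Z\mid y,D=d)=M_y$ for every $d$ with $\pi_d>0$. When both $\pi_i,\pi_j>0$, both conditionals equal $M_y$ and hence equal each other.

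The main obstacle is this positivity and measure-theoretic caveat. If for some label $y$ one domain has zero conditional mass ($\pi_i(y)=0$), the mutual information carries no information about $P(Z\mid Y{=}y,D{=}i)$. So the equivalence only holds for labels supported in both domains, and only almost surely in $y$ and in $Z$. I would state this as a standing assumption: every class has positive probability in every source domain, which is the usual DomainBed setting. Equality of the distributions is then understood $P_Y$-a.s., and for continuous $Z$ it is equality of measures rather than pointwise equality of densities. With this assumption, the chain of equivalences closes the proof.
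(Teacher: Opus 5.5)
Your proposal is correct and follows essentially the same route as the paper's proof. Both arguments reduce vanishing conditional mutual information on the restricted population $\{D\in\{i,j\}\}$ to equality of the two class-conditional laws, using that conditioning on $\{D=d\}$ inside that event returns $P(Z\vert Y,D=d)$; the only difference is that you derive the ``standard characterization'' (zero CMI iff conditional independence) explicitly via the mixture/Jensen--Shannon form of the KL divergence and Gibbs' inequality, whereas the paper cites it. Your positivity caveat is well taken and is left implicit in the paper: the right-hand side is only defined, and the equivalence only holds, for labels $y$ with $P(Y=y,D=i)>0$ and $P(Y=y,D=j)>0$, with equality understood almost surely in $y$ and as equality of measures.
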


Proposition~\ref{prop:domain-expansion} formalizes that enlarging the domain set introduces additional nonnegative pairwise invariance penalties. As a result, the objective decreases pointwise for every $\phi$, and hence the optimal value is non-increasing. Proposition~\ref{prop:pairwise-invariance} shows that each pairwise term vanishes if and only if $P(Z\vert Y,D=i)=P(Z\vert Y,D=j)$. Enforcing this over all $(i,j)\in\mathcal{P}(\mathcal{K})$ yields $Z \perp D \vert Y$, i.e., global conditional invariance across domains. From the Propositions~\ref{prop:domain-expansion} and \ref{prop:pairwise-invariance}, we have the following theorems.

\begin{theorem}\label{thm:shrinking-set}
Assume that, for a domain set $\mathcal{K}$, we have
$I\!\left(
Z;D
\,\middle|\,
Y,\; D\in\{i,j\}
\right)
=
0
,~\forall (i,j)\in\mathcal{P}(\mathcal{K}).$
Then, we have
\begin{align}
P(Z\vert Y,D=i)
=
P(Z\vert Y,D=j)
\qquad
\forall i,j\in\mathcal{K},
\label{eq:all_pairs_equal}
\end{align}
and thus, $Z \perp D \vert Y$ over $D\in\mathcal{K}$.
\end{theorem}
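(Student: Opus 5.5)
The proof applies Proposition~\ref{prop:pairwise-invariance} to each pair and then passes from pairwise equality of conditionals to conditional independence.

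\textbf{Step 1: pairwise equality.} Fix any $i,j\in\mathcal{K}$. If $i=j$ there is nothing to prove. If $i\neq j$, relabel so that $i<j$; then $(i,j)\in\mathcal{P}(\mathcal{K})$ and the hypothesis gives $I(Z;D\mid Y,D\in\{i,j\})=0$. Proposition~\ref{prop:pairwise-invariance} (the ``only if'' direction) then yields $P(Z\mid Y,D=i)=P(Z\mid Y,D=j)$. This holds for every pair, which is exactly \eqref{eq:all_pairs_equal}.

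\textbf{Step 2: conditional independence.} Restrict attention to the event $D\in\mathcal{K}$. Fix $y$ with $P(Y=y,D\in\mathcal{K})>0$ and a reference domain $k_0\in\mathcal{K}$ with $P(Y=y,D=k_0)>0$. By Step 1, every $k\in\mathcal{K}$ with $P(Y=y,D=k)>0$ satisfies $P(Z\mid Y=y,D=k)=Q_y$, where $Q_y:=P(Z\mid Y=y,D=k_0)$. Apply the law of total probability over $D$ within $\mathcal{K}$. For any measurable set $A$,
\begin{align*}
P(Z\in A\mid Y=y,D\in\mathcal{K})=\sum_{k\in\mathcal{K}}P(D=k\mid Y=y,D\in\mathcal{K})\,Q_y(A)=Q_y(A).
\end{align*}
Hence $P(Z\mid Y=y,D=k)=P(Z\mid Y=y,D\in\mathcal{K})$ for all such $k$. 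This is the definition of $Z\perp D\mid Y$ on $D\in\mathcal{K}$.

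\textbf{Main obstacle.} The only delicate point is measure-theoretic. Conditionals given $D=k,Y=y$ are defined only where that event has positive probability. Pairs of domains with disjoint label support therefore impose no constraint for those $y$, and zero-probability $y$ are handled up to null sets. The argument above restricts to positive-mass $(y,k)$ throughout. If $Y$ is continuous, the same argument goes through using regular conditional distributions with equalities holding $P_Y$-almost everywhere. Beyond this bookkeeping, the theorem follows directly from Proposition~\ref{prop:pairwise-invariance}.
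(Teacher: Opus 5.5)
Your proof is correct and follows essentially the same route as the paper: apply Proposition~\ref{prop:pairwise-invariance} to every pair in $\mathcal{P}(\mathcal{K})$ to obtain a common law $Q_Y$, then marginalize over the domains in $\mathcal{K}$ to show that this law equals the conditional given $Y$. Your explicit conditioning on $\{D\in\mathcal{K}\}$ and your restriction to positive-mass $(y,k)$ cells are slightly more careful than the paper's marginalization step, which implicitly treats $\mathcal{K}$ as the full support of $D$.
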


\begin{theorem}\label{thm:monotonicity}
Suppose that for every domain set $\mathcal{K}$, there exists an optimizer
$\phi^{*}_{\mathcal{K}}$ of \eqref{eq:pairwise_cmi_objective} satisfying exact pairwise
conditional invariance:
\begin{align}
I\!\left(
Z^{*}_{\mathcal{K}};D
\,\middle|\,
Y,\; D\in\{i,j\}
\right)
=
0
\qquad
\forall (i,j)\in\mathcal{P}(\mathcal{K}),
\label{eq:exact_invariance_opt}
\end{align}
where $Z^{*}_{\mathcal{K}}=g_{\phi^{*}_{\mathcal{K}}}(X)$. Then, for
$\mathcal{K}\subseteq \mathcal{K}'$, we have $I(Z^{*}_{\mathcal{K}'};Y) \le I(Z^{*}_{\mathcal{K}};Y).$
\end{theorem}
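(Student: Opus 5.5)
The argument is a sandwich of optimal values. The hypothesis of exact invariance at the optimizer turns each objective value into a pure mutual-information quantity, and Proposition~\ref{prop:domain-expansion} orders those values.

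Fix $\mathcal{K}\subseteq\mathcal{K}'$ and let $\phi^*_{\mathcal{K}}$ and $\phi^*_{\mathcal{K}'}$ be the optimizers supplied by the assumption. Since every pairwise penalty vanishes at these optimizers by \eqref{eq:exact_invariance_opt}, the objective value there is just the mutual-information term:
\begin{align*}
V_{\mathcal{K}} = \mathcal{J}_{\mathcal{K}}(\phi^*_{\mathcal{K}}) = I(Z^*_{\mathcal{K}};Y),
\qquad
V_{\mathcal{K}'} = \mathcal{J}_{\mathcal{K}'}(\phi^*_{\mathcal{K}'}) = I(Z^*_{\mathcal{K}'};Y).
\end{align*}
Proposition~\ref{prop:domain-expansion} gives $V_{\mathcal{K}'}\le V_{\mathcal{K}}$, so substituting these identities yields $I(Z^*_{\mathcal{K}'};Y)\le I(Z^*_{\mathcal{K}};Y)$ directly.

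If we prefer to argue pointwise rather than through the optimal values, we can chain the inequalities explicitly:
\begin{align*}
I(Z^*_{\mathcal{K}'};Y) = \mathcal{J}_{\mathcal{K}'}(\phi^*_{\mathcal{K}'}) \le \mathcal{J}_{\mathcal{K}}(\phi^*_{\mathcal{K}'}) \le \mathcal{J}_{\mathcal{K}}(\phi^*_{\mathcal{K}}) = I(Z^*_{\mathcal{K}};Y).
\end{align*}
The first inequality is the pointwise statement of Proposition~\ref{prop:domain-expansion}. It holds because the pairwise set $\mathcal{P}(\mathcal{K}')$ contains $\mathcal{P}(\mathcal{K})$, and each conditional mutual information is nonnegative. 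The second inequality is the optimality of $\phi^*_{\mathcal{K}}$ for $\mathcal{J}_{\mathcal{K}}$.

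There is no real technical obstacle. The only points needing care are bookkeeping ones:
\begin{itemize}
\item Each optimizer must be used for its own domain set, since \eqref{eq:exact_invariance_opt} is assumed for every $\mathcal{K}$ separately.
\item The suprema must be attained, which is exactly what the existence of $\phi^*_{\mathcal{K}}$ guarantees.
\item All quantities must be finite, so that the subtraction in \eqref{eq:pairwise_cmi_objective} is well-defined.
\end{itemize}
I would also remark, though it is not needed for the proof, that Theorem~\ref{thm:shrinking-set} interprets the result: $Z^*_{\mathcal{K}'}$ is conditionally invariant over the larger set $\mathcal{K}'$, and hence also over $\mathcal{K}$.
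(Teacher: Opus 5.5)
Your proof is correct and is the same as the paper's: exact invariance at each optimizer collapses $V_{\mathcal{K}}$ and $V_{\mathcal{K}'}$ to the corresponding mutual-information terms, and Proposition~\ref{prop:domain-expansion} orders them. Your explicit chain through $\mathcal{J}_{\mathcal{K}}(\phi^{*}_{\mathcal{K}'})$ is also valid; it simply spells out that step as the pointwise inequality combined with the optimality of $\phi^{*}_{\mathcal{K}}$.
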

Theorem~\ref{thm:shrinking-set} shows that each pairwise term enforces equality of conditional distributions across domains, and that vanishing penalties over all pairs imply global invariance $Z \perp D \vert Y$. Theorem~\ref{thm:monotonicity} further shows that enlarging the domain set introduces additional invariance constraints, reducing the optimal objective value and, under exact invariance, the achievable label information $I(Z;Y)$. 

\textbf{Key takeaway.}
The above analysis reveals an inherent trade-off between domain invariance and predictive information. For a moderate number of domains, the optimal representation $Z^{*}_{\mathcal{K}}$ can retain sufficient label-relevant information while satisfying invariance constraints, i.e., $I(Z^{*}_{\mathcal{K}};Y)$ remains above the level required for accurate prediction. However, as the number of domains increases, the invariance constraints become progressively stronger, leading to a monotonic decrease in $I(Z^{*}_{\mathcal{K}};Y)$. When this quantity falls below the level necessary for reliable prediction, the representation becomes overly invariant and loses discriminative information, resulting in degraded generalization performance. Empirically, we observe a non-monotonic pattern under the fixed-budget domain-growth setting (see Appendix~\ref{app:rotated_colored_mnist_results}). Specifically, increasing $K$ initially improves target accuracy, but beyond a certain point leads to performance degradation. This behavior is consistent with the predicted trade-off: moderate domain diversity encourages the suppression of spurious correlations and improves invariance, while excessive heterogeneity enforces overly strong invariance constraints that suppress label-relevant information.

\subsection{Subset-Conditioned Invariance}

The analysis above suggests that enforcing invariance uniformly across all domains may be overly restrictive when predictive structure is only partially shared. We therefore consider a structured formulation in which invariance is conditioned on latent subsets of domains.

\begin{definition}[Subset-Conditioned Invariance]\label{def:subset-conditioned}
Let $S \in \{1,\dots,M\}$ denote a latent subset variable and let $Z^{(m)} = g^{(m)}_\phi(X)$ be the representation associated with subset $m$. We say that $Z^{(m)}$ satisfies \emph{subset-conditioned invariance} if $I(Z^{(m)}; D \vert Y, S=m)=0$.
Equivalently, conditioned on the label and subset assignment, the representation is invariant to domain-specific variation, i.e., $P(Z^{(m)} \vert Y, D=i, S=m)
=
P(Z^{(m)} \vert Y, D=j, S=m),$
for all domains $i,j$ satisfying
$P(D=i \vert S=m)>0$ and $P(D=j \vert S=m)>0$.
\end{definition}
Given Definition~\ref{def:subset-conditioned}, we consider the following objective:
\begin{align}
\max_{\phi, \pi} \sum_{m=1}^{M} I(Z^{(m)}; Y\vert S=m), 
\quad \mathrm{s.t.} ~~~ I(Z^{(m)}; D \vert Y, S=m) = 0, \quad \forall m. \notag
\end{align}
This formulation encourages each component $Z^{(m)}$ to capture predictive features that are invariant within a subset of domains.
\begin{lemma}[Subset-Conditioned Invariance]\label{lemma:subset}
For a fixed subset $m$, the condition $I(Z^{(m)}; D \vert Y, S=m) = 0$ holds if and only if, for all domains $i,j$ such that $P(D=i \vert S=m) > 0$ and $P(D=j \vert S=m) > 0$, $P(Z^{(m)} \vert Y, D=i, S=m) = P(Z^{(m)} \vert Y, D=j, S=m).$
\end{lemma}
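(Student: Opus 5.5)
The plan is to expand the conditional mutual information as an expected KL divergence and then use that KL divergence vanishes exactly when its two arguments agree. Everything is conditioned on $S=m$, so I will work throughout under the conditional law $P_m(\cdot) = P(\cdot \mid S=m)$ and write $Z=Z^{(m)}$. With $\mathcal{D}_m = \{i : P(D=i\mid S=m)>0\}$ denoting the support of $D$ under $P_m$, the standard identity gives
\begin{align}
I(Z;D\mid Y,S=m) = \sum_{y}\sum_{i\in\mathcal{D}_m} P_m(Y=y,D=i)\,\mathrm{KL}\big(P_m(Z\mid Y=y,D=i)\,\big\|\,P_m(Z\mid Y=y)\big), \notag
\end{align}
where $P_m(Z\mid Y=y)=\sum_{j\in\mathcal{D}_m} P_m(D=j\mid Y=y)\,P_m(Z\mid Y=y,D=j)$ is the mixture over domains. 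This mirrors the argument behind Proposition~\ref{prop:pairwise-invariance} and Theorem~\ref{thm:shrinking-set}, now applied under $P_m$ and to the whole support $\mathcal{D}_m$ rather than a single pair.

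\emph{($\Leftarrow$)} Suppose all $P_m(Z\mid Y,D=i)$ with $i\in\mathcal{D}_m$ coincide. Then the mixture $P_m(Z\mid Y=y)$ equals each component, so every KL term is zero and the mutual information vanishes.

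\emph{($\Rightarrow$)} Suppose the mutual information is zero. Every summand is a nonnegative weight times a nonnegative KL divergence, so each summand with $P_m(Y=y,D=i)>0$ must be zero. By Gibbs' inequality, $P_m(Z\mid Y=y,D=i)=P_m(Z\mid Y=y)$ for all such $(y,i)$. Hence any two domains $i,j\in\mathcal{D}_m$ share the same conditional law, namely the common mixture, which gives the claimed pairwise equality. Alternatively, one could apply Proposition~\ref{prop:pairwise-invariance} to each pair inside $\mathcal{D}_m$ under $P_m$ and then use transitivity, as in Theorem~\ref{thm:shrinking-set}.

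The main obstacle is the support and null-set bookkeeping. The equality can only be asserted for labels $y$ with $P_m(Y=y,D=i)>0$ and $P_m(Y=y,D=j)>0$, and for continuous $Z$ it holds only almost surely. I would address this by stating the equality in that sense, i.e.\ for every $y$ in the common support, which is implicitly how the definition must be read since conditionals on null events are undefined. I would also note that the KL and mixture identities require regular conditional distributions, which are standard here.
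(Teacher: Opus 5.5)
Your proposal is correct and takes essentially the same route as the paper. The paper uses the standard fact that $I(Z^{(m)};D\mid Y,S=m)=0$ iff $Z^{(m)}\perp D\mid Y,S=m$, i.e.\ iff each domain-conditional law equals the mixture $P(Z^{(m)}\mid Y=y,S=m)$, and then argues both directions through that common mixture as you do; your expected-KL expansion with Gibbs' inequality just makes that step explicit.

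Your caveat about supports is well taken. The equality can only be asserted for labels $y$ that have positive mass in both domains, and the paper's forward direction proves it for $P(D=i\mid Y=y,S=m)>0$ but then silently restates it for every domain with $P(D=i\mid S=m)>0$.
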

\begin{corollary}
    Subset-conditioned invariance can be enforced through pairwise alignment across domains within each subset.
\end{corollary}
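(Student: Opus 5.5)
The plan is to deduce the corollary from Lemma~\ref{lemma:subset}, mirroring how Theorem~\ref{thm:shrinking-set} follows from Proposition~\ref{prop:pairwise-invariance}, but working entirely inside the conditional law given $S=m$. First make precise what ``pairwise alignment within subset $m$'' means. Let $\mathcal{K}_m=\{k : P(D=k\vert S=m)>0\}$ be the support of the domain variable under subset $m$. Define the subset-restricted pairwise penalty
\begin{align}
\mathcal{R}_m(\phi)=\sum_{(i,j)\in\mathcal{P}(\mathcal{K}_m)} I\!\left(Z^{(m)};D \,\middle|\, Y,\, S=m,\, D\in\{i,j\}\right). \notag
\end{align}
The claim to prove is then
\begin{align}
I(Z^{(m)};D\vert Y,S=m)=0 \iff \mathcal{R}_m(\phi)=0 . \notag
\end{align}
In words, the global subset-conditioned constraint is equivalent to vanishing pairwise alignment terms among the domains that subset $m$ actually serves.

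\emph{Forward direction.} Assume $I(Z^{(m)};D\vert Y,S=m)=0$. Lemma~\ref{lemma:subset} gives $P(Z^{(m)}\vert Y,D=i,S=m)=P(Z^{(m)}\vert Y,D=j,S=m)$ for all $i,j\in\mathcal{K}_m$. Now apply Proposition~\ref{prop:pairwise-invariance} to the probability space conditioned on $S=m$, with $Z^{(m)}$ in place of $Z$. Each pair term of $\mathcal{R}_m$ is then zero, so $\mathcal{R}_m=0$.

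\emph{Reverse direction.} Assume $\mathcal{R}_m=0$. Every term is a nonnegative conditional mutual information, so each term vanishes individually. Proposition~\ref{prop:pairwise-invariance}, applied conditionally on $S=m$, then gives equality of the class-conditional laws for every pair in $\mathcal{K}_m$. No transitivity argument is needed, because every pair is included in the sum. By the ``if'' direction of Lemma~\ref{lemma:subset}, $I(Z^{(m)};D\vert Y,S=m)=0$.

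Summing over $m$ shows that the full constraint set of the subset objective is equivalent to $\sum_m \mathcal{R}_m=0$. This justifies enforcing it through pairwise (e.g.\ optimal-transport) alignment within each subset.

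\textbf{Main obstacle.} The delicate step is the conditional reuse of Proposition~\ref{prop:pairwise-invariance}. That proposition is stated under the unconditioned law with a specific domain pair, so two details must be checked:
\begin{itemize}
\item Conditioning on the event $\{S=m, D\in\{i,j\}\}$ yields a valid probability measure. This requires the event to have positive mass, which holds precisely because $i,j\in\mathcal{K}_m$.
\item The ``if and only if'' in that proposition survives conditioning. Here I would use the identity $I(A;B\vert C)=0 \iff A\perp B\vert C$ applied under $P(\cdot\vert S=m,D\in\{i,j\})$, with $D$ taking two values of positive probability.
\end{itemize}
If $S$ is soft, as with routing weights, the conditioning should be interpreted through the induced joint law $P(X,Y,D,S)$, with $\mathcal{K}_m$ defined as above. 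Under that interpretation the argument goes through unchanged.
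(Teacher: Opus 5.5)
Your argument is correct and is essentially the paper's proof. Both rest on Lemma~\ref{lemma:subset}, which already gives the equivalence between $I(Z^{(m)};D\vert Y,S=m)=0$ and $P(Z^{(m)}\vert Y,D=i,S=m)=P(Z^{(m)}\vert Y,D=j,S=m)$ for every pair of domains with positive mass under $S=m$. The only difference is that you add one step through Proposition~\ref{prop:pairwise-invariance}, applied under $P(\cdot\vert S=m)$, so that ``pairwise alignment'' becomes a vanishing sum of pairwise conditional mutual-information penalties, whereas the paper takes the pairwise distributional equalities themselves as the alignment targets.
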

\begin{theorem}[Optimality of Subset-Conditioned Invariance]\label{thm:subset-invariance}
Suppose there exist subsets $\{\mathcal{D}_m\}_{m=1}^{M}$ such that for each subset $m$, $P(Y \vert X, D=i) = P(Y \vert X, D=j), \quad \forall i,j \in \mathcal{D}_m$, and the predictive structures differ across subsets. Then any representation satisfying global invariance $I(Z; D \vert Y) = 0$ may discard predictive information, whereas a subset-conditioned representation $\{Z^{(m)}\}$ satisfying
$I(Z^{(m)}; D \vert Y, S=m) = 0$ for each $m$ can preserve all subset-specific predictive factors.
\end{theorem}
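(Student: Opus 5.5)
The statement has two parts: a negative claim (``may discard'') and a positive claim (``can preserve''). The negative part is existential, so I would prove it with a construction. The positive part I would prove from Lemma~\ref{lemma:subset} together with a data-processing argument.

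\textbf{Negative part.} I would build a toy instance with $K=2$ domains forming two singleton subsets, $\mathcal{D}_1=\{1\}$ and $\mathcal{D}_2=\{2\}$, so the hypothesis holds trivially within each subset. Set $X=(X_1,X_2)$ with binary $Y$. In domain~1, take $X_1=Y$ and let $X_2$ be independent noise. In domain~2, take $X_2=Y$ and let $X_1$ be independent noise with a different distribution.

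By Theorem~\ref{thm:shrinking-set}, any $Z=g(X)$ with $I(Z;D\vert Y)=0$ must satisfy $P(Z\vert Y,D=1)=P(Z\vert Y,D=2)$. I would then show that this forces $I(Z;Y\vert D=k)$ to be strictly below $H(Y\vert D=k)$ for some $k$. The idea is that the statistic $Z$ has to treat both coordinates symmetrically in law. A concrete way to see this is to restrict to $Z$ depending on a single coordinate. Then $Z=g(X_1)$ has conditional law $\delta_{g(Y)}$ in domain~1 but a $Y$-independent law in domain~2, so invariance forces $g$ to be constant on $Y$ and hence $I(Z;Y)=0$.

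For general $Z$, I would not attempt a full characterization, since the statement only claims ``may''. It suffices to exhibit the information loss for the natural invariant candidates and to note that, by Theorem~\ref{thm:monotonicity}, the optimal value of $I(Z;Y)$ cannot exceed what the two-domain constraint permits. I expect this to be the main obstacle: a clean argument that no globally invariant $Z$ retains full label information. The simplest fix is to choose the noise distributions so that $X_1$ in domain~2 is independent noise with the same marginal as $Y$, uniform Bernoulli. Then $(X_1,X_2)$ has the same joint law in both domains with the coordinates swapped. Any $Z=g(X_1,X_2)$ that is both invariant and fully predictive would then need $g$ to recover $Y$ from both $(Y,N)$ and $(N,Y)$. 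I would check this by a direct case enumeration over the four binary input pairs, showing that it forces $I(Z;Y)<1$.

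\textbf{Positive part.} Set $Z^{(m)}=X_m$, the coordinate that carries subset $m$'s predictive factor. Within subset $m$ there is only one domain (and more generally the predictive conditional is shared across its domains), so the condition in Lemma~\ref{lemma:subset} holds: the relevant conditional distributions coincide for all domains with positive mass under $S=m$. Therefore $I(Z^{(m)};D\vert Y,S=m)=0$.

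For information preservation, I would use $P(Y\vert X,D=i)=P(Y\vert X,D=j)$ within $\mathcal{D}_m$ to conclude that $Y\perp D\mid X,\ S=m$. Choosing $Z^{(m)}$ as a sufficient statistic of $X$ for $Y$ within subset $m$ then gives $I(Z^{(m)};Y\vert S=m)=I(X;Y\vert S=m)$ by the data-processing equality for sufficient statistics. Summing over $m$ yields the maximal value of the objective.

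In the general (non-toy) case, the one gap is that the invariance condition must also hold for this sufficient statistic. I would handle it by assuming, as part of the hypothesis of ``shared predictive structure'', that the sufficient statistic has domain-invariant class-conditional law within each subset, and I would state this assumption explicitly.
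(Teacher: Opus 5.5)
Your proposal is correct and uses the same instance as the paper: the label is carried by $X_1$ in one subset and by $X_2$ in the other, with $Z^{(m)}=X_m$ for the positive half. Your argument for the negative half, however, is different and more rigorous.

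The paper leaves the noise unspecified and asserts that any component of $Z$ preserving the role of $X_1$ must break invariance. That step depends on the noise law. In the paper's four-domain instance with $Y\in\{\pm1\}$ and Gaussian noise, set $g(x)=x_1$ if $x_1\in\{\pm1\}$ and $g(x)=x_2$ otherwise. This gives $Z=Y$ almost surely in every domain, so $Z$ is globally invariant and fully predictive. Your matched uniform-Bernoulli noise closes this loophole. The $g(y,y)$ atoms cancel, so invariance is exactly $g(0,1)=g(1,0)$. Every invariant $g$ therefore factors through the quotient merging $(0,1)$ and $(1,0)$, and hence $I(Z;Y\mid D=k)\le 1/2<1=I(X;Y\mid D=k)$ for all invariant $Z$ at once. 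Lead with this enumeration. The single-coordinate case and the appeal to Theorem~\ref{thm:monotonicity}, which gives no strict inequality, do not by themselves establish the claim for all $Z$.

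Three refinements:

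\textbf{State the loss domain-conditionally}, as in your first formulation. Do not phrase it as pooled $I(Z;Y)<1$, or as one decoder recovering $Y$ from both $(Y,N)$ and $(N,Y)$. With equal domain weights, even $X$ itself carries only pooled $I(X;Y)=1/2$, because the input $(0,1)$ occurs under both labels. The symmetric invariant map with distinct values on $(0,0)$, $(1,1)$ and $\{(0,1),(1,0)\}$ attains this. So the discarded information appears only after conditioning on $D$ (equivalently $S$), which is exactly what the subset-conditioned objective measures.

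\textbf{Avoid singleton subsets.} With singletons, $I(Z^{(m)};D\mid Y,S=m)=0$ holds for every $Z^{(m)}$, because $D$ is determined by $S$, so your positive half is vacuous. The paper's two-domain subsets avoid this. Duplicating each of your domains, as in the paper's four-domain layout, makes the within-subset constraint bite and leaves your enumeration unchanged.

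\textbf{The general sufficient-statistic case.} The extra assumption you flag there is genuinely necessary: covariate shift in $P(X\mid D)$ within a subset can make every sufficient statistic non-invariant. Since the claim is existential, though, the example alone suffices, as in the paper.
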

The above results show that enforcing $I(Z^{(m)}; D \vert Y, S=m)=0$ is equivalent to aligning class-conditional feature distributions across domain pairs within each subset. This provides a practical route to approximate the objective via pairwise alignment weighted by subset assignment probabilities.
In the next section, we develop a method that instantiates this formulation using a MoE architecture, where experts represent subset-invariant components and routing approximates the latent subset. Proofs of all propositions, lemmas, and theorems are provided in Appendix~\ref{app:proofs}.

\section{Proposed Method}

To avoid the over-constraint induced by global invariance, we propose a routing-based MoE framework that learns multiple subset-invariant components and composes them adaptively for prediction (Figure~\ref{fig:method_overview}). Motivated by the subset-conditioned invariance formulation in Section~\ref{sec:subset_shared}, each expert is designed to capture a distinct subset-specific predictive mechanism, while the routing function approximates the latent subset assignment variable $S$ through input-dependent expert selection. Given an input $x$, a shared encoder produces a feature representation $u=b_\phi(x)$, the router outputs a distribution over experts $\pi(x)=\mathrm{softmax}(g_\theta(u))$, and each expert produces an expert-specific representation $z^{(m)}=h_m(u)$. The final representation is formed by aggregating expert outputs according to the routing distribution $z(x)=\sum_{m=1}^M \pi_m(x) z^{(m)}$. This allows different subset-invariant components to be activated adaptively for different samples.

This design introduces three challenges: (1) determining which domain pairs should be aligned for each expert, (2) preventing experts from collapsing to redundant representations, and (3) ensuring routing remains both confident at the instance level and balanced across the dataset. To address these challenges, we optimize the following objective:
\begin{align}
\mathcal L
=
\mathcal L_{\mathtt{cls}}
+
\lambda_{\mathtt{ssi}}\mathcal L_{\mathtt{ssi}}^{\mathrm{OT}}
+
\lambda_{\mathrm{sp}}\mathcal L_{\mathtt{sp}}
+
\lambda_{\mathrm{bal}}\mathcal L_{\mathtt{bal}}
+
\lambda_{\mathrm{div}}\mathcal L_{\mathtt{div}}.
\end{align}
Here, $\mathcal L_{\mathrm{cls}}$ denotes the classification loss, $\mathcal L^{\mathrm{OT}}_{\mathtt{ssi}}$ enforces routing-conditioned subset alignment, $\mathcal L_{\mathtt{sp}}$ encourages confident routing, $\mathcal L_{\mathtt{bal}}$ promotes balanced expert utilization, and $\mathcal L_{\mathtt{div}}$ encourages diverse expert representations. We describe each component below. Architectural details are provided in Appendix~\ref{sec:messi-architecture}. Full optimization details and hyperparameters are provided in Appendices~\ref{app:training} and~\ref{app:hyperparameters}.
\begin{figure*}[t]
    \centering
    \includegraphics[width=\textwidth]{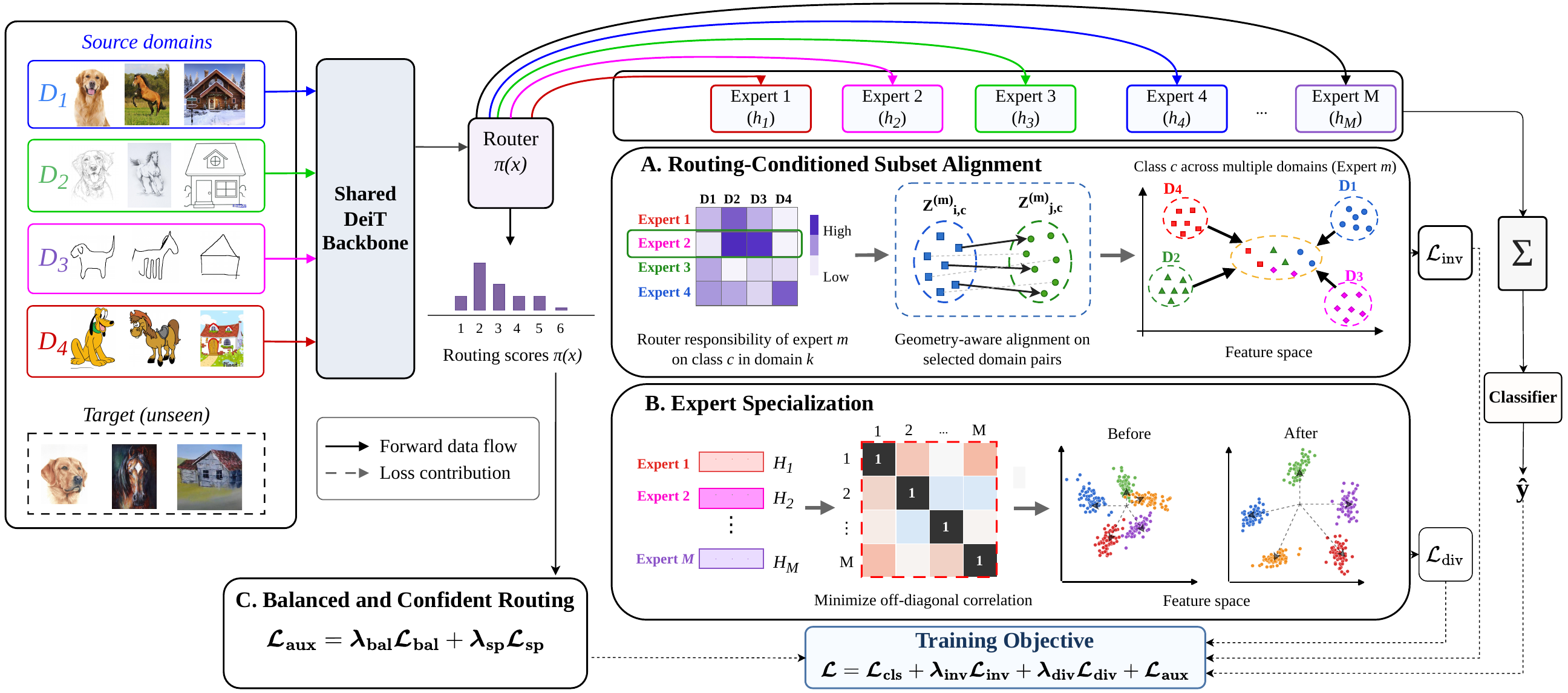}
    \caption{Overview of MESSI. An input is encoded by a shared backbone and routed to multiple experts,
    whose outputs are composed into the final representation for prediction. The training objective combines
    classification loss with subset-conditioned invariance, confident and balanced routing, and expert specialization terms.}
    \label{fig:method_overview}
    \vspace{-0.6cm}
\end{figure*}

\subsection{Routing-Conditioned Subset Alignment}\label{sec:subset}

Section~\ref{sec:subset_shared} shows that enforcing conditional invariance uniformly across all domain pairs can become overly restrictive as the number and heterogeneity of source domains increase. In particular, \eqref{eq:pairwise_cmi_objective} penalizes domain-discriminative information across all pairs $(i,j)$, which leads to progressively stronger constraints as the domain set grows. To avoid this effect, we relax the objective by enforcing invariance selectively over a subset of domain pairs.
Specifically, we enforce invariance at the level of expert-specific representations. For each expert $m$, domain $k$, and class $c$, we compute the average routing responsibility
$\rho^{(m)}_{k,c} = \frac{1}{|\mathcal D_{k,c}|} \sum_{(x,y)\in\mathcal D_k,\, y=c} \pi_m(x)$,
which measures how strongly expert $m$ is associated with class $c$ in domain $k$.
Using these routing statistics, we define pairwise alignment weights
$a^{(m)}_{ijc}=\sigma(\alpha \rho^{(m)}_{i,c})\cdot\sigma(\alpha \rho^{(m)}_{j,c})$,
which become large when expert $m$ assigns high routing mass to class $c$ in both domains $i$ and $j$.
Consequently, the invariance objective is:
\begin{align}
&\mathcal L_{\mathtt{ssi}}^{\mathrm{OT}}
=
\sum_{m=1}^{M}\sum_{c\in\mathcal Y}\sum_{i<j}
a^{(m)}_{ijc}\,
W_{\varepsilon}\!\left(\mathcal Z^{(m)}_{i,c}, \mathcal Z^{(m)}_{j,c}\right), \\
\mathrm{s.t.} ~~~ 
&W_{\varepsilon}(P,Q)
=
\min_{\gamma\in\Pi(P,Q)}
\mathbb E\|z-\tilde z\|_2^2
+
\varepsilon\,\mathrm{KL}(\gamma \,\|\, P\otimes Q). \notag 
\end{align}
Our primary formulation uses entropic OT as the discrepancy measure (the motivation is demonstrated in Appendix~\ref{app:subset-alignment}). An MMD-based variant that uses the same routing-conditioned pair-selection mechanism is described in Appendix~\ref{sec:ssi_ot_scalability}.


\subsection{Expert Specialization}\label{sec:specialization}
A final failure mode is redundancy across experts: even with well-behaved routing, different experts may
still converge to similar outputs. 
To discourage this, let \(H_m\in\mathbb R^{B\times r}\) be the batch output
matrix of expert \(m\), and let \(\tilde H_m=H_m/(\|H_m\|_F+\varepsilon)\) be its normalized version.
Specifically, we minimize
\begin{align}
\mathcal L_{\mathtt{div}}
=
\sum_{m\neq n}
\left\|
\frac{1}{B}\tilde H_m^\top \tilde H_n
\right\|_F^2.
\label{eq:div}
\end{align}








The matrix \(\frac{1}{B}\tilde H_m^\top \tilde H_n\) measures the correlation between the feature directions of
experts \(m\) and \(n\) over the minibatch. If two experts produce similar representations, this matrix has
large magnitude and \eqref{eq:div} increases. Minimizing \(\mathcal L_{\mathtt{div}}\) therefore drives
experts toward complementary predictive directions rather than redundant ones.


Recent works have explored enforcing orthogonality among experts to promote functional diversity \citep{Hu2024LearnTP, hendawy2024multitask, oldfield2024multilinear, feng2025omoe, Guo2025AdvancingES}. Methods such as MOORE \citep{hendawy2024multitask} and OMOE \citep{feng2025omoe} project expert outputs into an orthogonal space via Gram-Schmidt, but this may not preserve task-relevant information and can discard salient features. In contrast, the orthogonality loss in \citep{Guo2025AdvancingES} admits a trivial minimum where expert outputs collapse to zero, reducing the penalty without ensuring meaningful diversity. Our specialization loss instead promotes diversity by minimizing cosine similarity between expert outputs, encouraging directional decorrelation while avoiding scale-related degeneracies.

\subsection{Balanced and Confident Routing}\label{sec:balance}
The subset-conditioned invariance loss determines \emph{which} class-conditional domain pairs each expert should align, but it does not by itself guarantee that routing produces a meaningful decomposition. In particular, two degenerate regimes can arise. If routing is too diffuse, each input weakly activates many experts, so the representation remains effectively shared and expert specialization does not emerge. If routing collapses globally, only a small subset of experts receives most of the traffic, while the remaining experts are rarely trained and therefore cannot capture distinct subset-shared factors. We address these two failure modes by regularizing routing at both the instance and dataset levels:
\begin{align}
\mathcal L_{\mathtt{sp}}
=
- \mathbb E_x\!\left[\sum_{m=1}^M \pi_m(x)\log \pi_m(x)\right],
\quad 
\mathcal L_{\mathtt{bal}}
=
\sum_{m=1}^M\left(\mathbb E_x[\pi_m(x)]-\frac{1}{M}\right)^2.
\end{align}
The two objectives regularize routing at complementary scales. The sparsity term $\mathcal L_{\mathtt{sp}}$ acts locally on each routing distribution $\pi(x)$, encouraging low-entropy expert assignment for individual samples. The balancing term $\mathcal L_{\mathtt{bal}}$ instead operates on the aggregated routing statistics $\mathbb E_x[\pi_m(x)]$, preventing a small subset of experts from dominating across the dataset.

\section{Experiments}
\label{sec:experiments}

Our experiments are organized around three claims. $\bm{\mathcal{C}}_1$: Subset-conditioned expert alignment improves standard DG performance. $\bm{\mathcal{C}}_2$: \textsc{MESSI} is more robust than globally aligned methods as the number of source domains increases. $\bm{\mathcal{C}}_3$: The gains arise from routing-induced subset-conditioned alignment rather than additional MoE capacity, global alignment, or random sparse alignment.


\begin{table}[!h]
\caption{%
OOD accuracy (\%) under the DomainBed training-domain validation criterion, averaged over three seeds. \textbf{MESSI-Ti} and \textbf{MESSI-S} use DeiT-Ti/16 and DeiT-S/16 backbones, respectively. Best results are in \textbf{bold}; second-best results are \underline{underlined}.
}
\label{tab:main}
\centering
\small
\setlength{\tabcolsep}{4.5pt}
\renewcommand{\arraystretch}{1.05}
\begin{tabular*}{\textwidth}{@{\extracolsep{\fill}}llccccc}
\toprule
Method & Backbone & PACS $\uparrow$ & OfficeHome $\uparrow$ & TerraInc $\uparrow$ & DomainNet $\uparrow$ & Avg. $\uparrow$ \\
\midrule
ERM~\citep{Vapnik1991PrinciplesOR}
  & ResNet-50
  & 83.8\ms{0.8} & 66.6\ms{0.4} & 47.2\ms{0.8} & 41.6\ms{0.2} & 59.8 \\
CORAL~\citep{Sun2016DeepCC}
  & ResNet-50
  & 83.5\ms{0.4} & 66.0\ms{0.5} & 43.8\ms{0.9} & 39.2\ms{0.3} & 58.1 \\
Fishr~\citep{Ram2021FishrIG}
  & ResNet-50
  & 85.5\ms{0.2} & 68.6\ms{0.2} & 47.4\ms{1.6} & 41.7\ms{0.3} & 60.8 \\
SAGM~\citep{Wang2023SharpnessAwareGM}
  & ResNet-50
  & 86.4\ms{1.2} & 69.4\ms{0.2} & 48.8\ms{1.0} & 43.2\ms{0.4} & 61.9 \\
LFME~\citep{Chen2024LFMEAS}
  & ResNet-50
  & 84.9\ms{0.4} & 68.5\ms{0.2} & \textbf{49.5}\ms{\textbf{0.8}} & 38.6\ms{0.2} & 60.4 \\
\midrule
ERM~\citep{Vapnik1991PrinciplesOR}
  & DeiT-S/16
  & 86.2\ms{0.1} & 72.2\ms{0.4} & 42.0\ms{0.8} & 47.3\ms{0.2} & 61.9 \\
DynMoE~\citep{Guo2024DynamicMO}
  & DeiT-S/16
  & 85.2\ms{0.4} & 73.4\ms{0.3} & 44.5\ms{0.6} & 45.9\ms{0.4} & 62.3 \\
GMoE~\citep{Li2022SparseMA}
  & DeiT-S/16
  & 87.3\ms{0.1} & 73.5\ms{0.1} & 48.2\ms{0.5} & \underline{47.8\ms{0.5}} & 64.2 \\
OMoE~\citep{feng2025omoe}
  & DeiT-S/16
  & 87.1\ms{0.3} & 73.2\ms{0.1} & 46.0\ms{0.5} & 46.1\ms{0.2} & 63.1 \\
\midrule
\rowcolor{gray!10}
\textbf{MESSI-Ti-OT (Ours)}
  & DeiT-Ti/16
  & 86.0\ms{0.3} & 69.6\ms{0.2} & 42.5\ms{1.1} & 43.3\ms{0.4} & 60.4 \\

\rowcolor{gray!10}
\textbf{MESSI-S-MMD (Ours)}
  & DeiT-S/16
  & \underline{89.3\ms{0.6}} & \underline{74.0\ms{0.3}} & 47.5\ms{0.4} & 46.2\ms{0.4} & \underline{64.3} \\

\rowcolor{gray!10}
\textbf{MESSI-S-OT (Ours)}
  & DeiT-S/16
  & \textbf{90.9}\ms{\textbf{0.7}} & \textbf{76.1}\ms{\textbf{0.2}} & \underline{49.3\ms{0.2}} & \textbf{48.6}\ms{\textbf{0.3}} & \textbf{66.2} \\
\bottomrule
\end{tabular*}
\end{table}
\subsection{Experimental Setup}
\label{sec:setup}

\textbf{Benchmarks.} We evaluate standard OOD generalization on DomainBed~\citep{Gulrajani2020InSO},
including PACS~\citep{Li2017DeeperBA}, OfficeHome~\citep{Venkateswara2017DeepHN},
TerraIncognita~\citep{Beery2018RecognitionIT}, and DomainNet~\citep{Peng2018MomentMF}, and test domain growth on Rotated-Colored MNIST (see Appendix~\ref{app:rotated_colored_mnist} for details), a controlled variant inspired by Rotated MNIST and Colored  MNIST~\citep{ghifary2015domain, arjovsky2019invariant}. We follow the DomainBed leave-one-domain-out protocol described in Appendix~\ref{app:domainbed_protocol}, while Rotated-Colored MNIST allows us to vary the number of source domains to validate the effect of monotonic shrinkage. Details of Rotated-Colored MNIST and the fixed-budget domain-growth protocol are provided in Appendices~\ref{app:rotated_colored_mnist} and~\ref{app:domain_growth_protocol}.

\textbf{Implementation.}
MESSI uses a pretrained DeiT \citep{touvron2021training} encoder followed by an $M{=}6$ expert MoE head, where each expert is a two-layer MLP and the router computes soft expert weights from the final CLS feature. We evaluate both DeiT-Ti/16 and DeiT-S/16 backbones under the same input resolution, augmentations, optimization schedule, and model-selection criterion as the baselines. All experiments are run on NVIDIA H100 GPUs. 
Baselines and parameter counts are provided in Appendix~\ref{app:implementation}.

\subsection{DomainBed Results}
\label{sec:main_results}
Table~\ref{tab:main} evaluates $\bm{\mathcal{C}}_1$ by reporting OOD accuracy on DomainBed benchmarks. Reported baselines provide broader DG context, while \textit{MESSI-S} enables a same-backbone comparison to transformer-based methods. Additional backbone comparisons are presented in Appendix~\ref{app:domainBed_full}. The same-backbone comparison in Table~\ref{tab:main} and the model-size analysis in Appendix~\ref{app:model_size} rule out the simple explanation that MESSI improves by using a larger MoE model. MESSI-S uses fewer method-specific parameters and lower inference cost than GMoE/OMoE, while achieving higher average OOD accuracy. However, parameter count alone does not isolate the training mechanism. 
\begin{wraptable}{r}{0.57\textwidth}
\caption{
Fixed-budget domain-growth results. Peak is the best test accuracy across source-domain budgets, and
$\mathrm{Drop}=\mathrm{Peak}-\mathrm{Acc}(K_{\max})$. Best results are shown in \textbf{bold}. Full curves and setup details are provided in Appendix~\ref{app:rotated_colored_mnist_results}.
}
\label{tab:domain_growth}
\centering
\small
\setlength{\tabcolsep}{4pt}
\begin{tabular}{lcccc}
\toprule
Method & Peak $\uparrow$ & $\mathrm{Acc}(K_{\max})$ $\uparrow$ & Drop $\downarrow$ & Avg $\uparrow$ \\
\midrule
CORAL ($\gamma{=}1$)   & 71.96 & 69.38 & 2.58 & 65.38 \\
CORAL ($\gamma{=}10$)  & 71.64 & 68.77 & 2.87 & 65.20 \\
CORAL ($\gamma{=}100$) & 70.10 & 66.46 & 3.64 & 63.88 \\
CORAL ($\gamma{=}250$) & 68.75 & 63.67 & 5.08 & 62.36 \\
CORAL ($\gamma{=}500$) & 67.69 & 59.78 & 7.91 & 60.62 
\\ 

\rowcolor{gray!10}
\textbf{MESSI (Ours)} & \textbf{75.31} & \textbf{73.45} & \textbf{1.86} & \textbf{68.32} \\
\bottomrule
\end{tabular}
\end{wraptable}
We therefore further compare against same-architecture controls in Appendix~\ref{app:objective_ablations}, including a classification-only MoE and variants that remove subset-conditioned alignment, routing sparsity, load balancing, or diversity. 
These controls test whether the gain remains after model capacity, backbone, routing module, and training budget are fixed. Per-target-domain accuracies for PACS, OfficeHome, TerraIncognita, and DomainNet are reported in Appendix~\ref{app:domainbed_per_domain}. As a supporting many-domain evaluation, we report WILDS-iWildCam \citep{koh2021wilds, beery2021iwildcam} results in Appendix~\ref{app:iwildcam_results}, using the protocol described in Appendix~\ref{app:iwildcam_protocol}.


\subsection{In-depth Analysis}
\label{sec:analysis_summary}
\textbf{MESSI is more robust under domain growth.}
Table~\ref{tab:domain_growth} evaluates $\bm{\mathcal{C}}_2$ on Rotated-Colored MNIST under a fixed-budget domain-growth protocol. The target domain and total source training budget are fixed, while the number of source domains increases. This setting separates the effect of increasing domain heterogeneity from the effect of adding more training data. MESSI achieves the best peak accuracy, the best accuracy at \(K_{\max}\), and the smallest drop from peak accuracy. The strongest CORAL variant reaches \(71.96\%\) peak accuracy and drops by \(2.58\) points at \(K_{\max}\), whereas MESSI reaches \(75.31\%\) and drops by only \(1.86\) points. This supports $\bm{\mathcal{C}}_2$: routing-conditioned subset alignment is less affected by the over-constraint induced by global moment alignment when source-domain heterogeneity grows. The fixed-budget protocol and full curves are provided in Appendices~\ref{app:domain_growth_protocol}, and~\ref{app:rotated_colored_mnist_results}.
\begin{wraptable}{r}{0.46\textwidth}
\centering
\caption{ $\bm{\mathcal{C}}_3$ \textbf{diagnostic on PACS.}}
\label{tab:c2_main}
\small
\setlength{\tabcolsep}{3.2pt}
\renewcommand{\arraystretch}{1.08}
\begin{tabular}{lcc}
\toprule
Metric & Control & MESSI \\
\midrule
Resp. corr. \(r\) & \(+1.00\) & \(-0.20\) \\
Slot overlap \(J\) & \(0.95\) & \(0.04\) \\
Discrepancy \(\downarrow\) & \(0.234\pm0.020\) & \(\mathbf{0.096\pm0.014}\) \\
\bottomrule
\end{tabular}
\end{wraptable}

\textbf{Routing-conditioned subset alignment explains the gain.}
Table~\ref{tab:mechanism_ablation_main} evaluates $\bm{\mathcal{C}}_3$ using same-architecture controls on PACS.
Both global and random sparse alignment reduce accuracy relative to ERM-MoE, showing that simply adding alignment to an MoE architecture is insufficient. Domain-only routing also underperforms, indicating that routing mass alone is too coarse without class-domain responsibility. In contrast, the routing-conditioned class-domain selector avoids the degradation observed with global and random alignment, while the full objective achieves the best performance. These results support $\bm{\mathcal{C}}_3$: the improvement depends on where alignment is applied, whereas sparse and balanced routing make the subset decomposition more effective.

\begin{table}[h]
\caption{
Mechanism ablation under the same MoE architecture. The first group isolates the alignment selector. The last two rows separate routing-conditioned class-domain alignment from routing regularization. Best results are shown in \textbf{bold}.
}
\label{tab:mechanism_ablation_main}
\centering
\footnotesize
\setlength{\tabcolsep}{3.2pt}
\renewcommand{\arraystretch}{1.05}
\begin{tabular*}{\textwidth}{@{\extracolsep{\fill}}llccc@{}}
\toprule
Variant
& Alignment selector
& Class-aware
& Routing-cond.
& PACS $\uparrow$ \\
\midrule
ERM-MoE
& none
& --
& --
& 89.1\ms{0.3} \\

Global-MoE
& all class-domain slots
& \checkmark
& --
& 87.1\ms{0.2} \\

Random-Subset-MoE
& matched random slots
& \checkmark
& --
& 87.0\ms{0.1} \\

Domain-Only Routing
& routing domain mass
& --
& \checkmark
& 88.2\ms{0.2} \\

MESSI w/o \( \mathcal{L}_{\mathtt{sp}},\mathcal{L}_{\mathtt{bal}}\)
& routing class-domain mass
& \checkmark
& \checkmark
& 89.0\ms{0.2} \\

\rowcolor{gray!10}
\textbf{MESSI (Ours)}
& routing class-domain mass
& \checkmark
& \checkmark
& \textbf{90.9}\ms{\textbf{0.7}} \\
\bottomrule
\end{tabular*}
\end{table}

\textbf{MESSI changes the alignment structure.}
We further test whether MESSI merely reweights a global alignment pattern or learns a different subset structure. A slot is a class-domain-expert tuple \((m,i,j,c)\), indicating that expert \(m\) aligns class \(c\) between domains \(i\) and \(j\). We compare selectors using the Pearson correlation \(r\) between flattened slot weights and the Jaccard overlap \(J\) between top-selected slot sets. Higher values indicate similar alignment structures.

Table~\ref{tab:c2_main} shows that matched random sparse selection remains close to global alignment, with \(r=1.00\) and \(J=0.95\). MESSI instead selects a substantially different set of slots, with \(r=-0.20\) and \(J=0.04\). On the same MESSI-selected slots \(S_{\textsc{M}}\), MESSI also reduces class-conditional discrepancy from \(0.234\pm0.020\) to \(0.096\pm0.014\). Thus, the gain is not explained by MoE capacity or sparsity alone; MESSI learns a different alignment structure and aligns those selected slots more effectively.

\textbf{Ablation test.} Objective ablations are reported in Appendix~\ref{app:objective_ablations}. Additional diagnostics on expert specialization, routing behavior, and routing-aware alignment controls are provided in Appendices~\ref{app:expert_specialization}, \ref{app:routing_diagnostics}, and~\ref{app:routing_aware_alignment}. Appendix~\ref{app:lambda_sweep} studies the pairwise-to-global invariance trade-off by sweeping the invariance weight and measuring pairwise discrepancy, domain predictability, and target accuracy.

\paragraph{Training Cost and Inference Efficiency}
\label{sec:training_time}

\begin{wraptable}{r}{0.60\textwidth}
    \centering
    \footnotesize
    \vspace{-10pt}
    \caption{
    Training and inference cost on PACS, measured on NVIDIA H100 GPUs.
    Train time is measured in wall-clock seconds per 1K optimization
    steps after a 100-step warm-up. Relative time is normalized by ERM.
    Inference is measured with all training-only losses disabled.
    }
    \label{tab:training_time}
    \setlength{\tabcolsep}{3pt}
    \renewcommand{\arraystretch}{1.02}
    \begin{tabular}{llccc}
        \toprule
        Method
        & Backbone
        & Train
        & Rel.
        & Infer. \\
        \midrule
        ERM
        & ResNet-50
        & 103.78$\pm$0.50
        & 1.00$\times$
        & 32.42$\pm$0.01 \\

        SAGM
        & ResNet-50
        & 211.37$\pm$1.26
        & 2.04$\times$
        & 32.44$\pm$0.01 \\

        GMoE
        & DeiT-S/16
        & 209.71$\pm$0.49
        & 2.02$\times$
        & 74.22$\pm$0.11 \\

        GMoE+SAGM
        & DeiT-S/16
        & 429.88$\pm$15.86
        & 4.14$\times$
        & 74.17$\pm$0.08 \\

        \rowcolor{gray!10}
        \textbf{MESSI-Ti-OT}
        & DeiT-Ti/16
        & 1200.86$\pm$22.38
        & 11.57$\times$
        & \textbf{23.50$\pm$0.01} \\

        \rowcolor{gray!10}
        \textbf{MESSI-S-MMD}
        & DeiT-S/16
        & 439.68$\pm$5.35
        & 4.24$\times$
        & 61.40$\pm$0.01 \\

        \rowcolor{gray!10}
        \textbf{MESSI-S-OT}
        & DeiT-S/16
        & 1286.26$\pm$28.57
        & 12.39$\times$
        & 61.41$\pm$0.01 \\
        \bottomrule
    \end{tabular}
    \vspace{-10pt}
\end{wraptable}

Table~\ref{tab:training_time} shows that MESSI introduces training-time overhead
mainly through the subset-conditioned discrepancy. MESSI-S-MMD costs
\(4.24\times\) ERM and is close to GMoE+SAGM, while MESSI-S-OT costs
\(12.39\times\) ERM. The gap between the two MESSI-S variants isolates the cost
of the entropic OT solver. MMD gives a non-iterative discrepancy, whereas OT
runs Sinkhorn iterations over selected class-conditional domain pairs.


\section{Conclusion}
\label{sec:conclusion}
We proposed MESSI, a domain generalization framework based on subset-shared invariance. Our analysis shows that enforcing invariance across many domains can restrict the feasible predictive representation and discard factors that are stable only across domain subsets. MESSI addresses this issue with a routing-conditioned MoE model, where each expert aligns the class-conditional domain pairs it actively serves and the routed mixture composes the resulting subset-invariant components for prediction. Experiments on DomainBed and controlled source-domain expansion support the claim that DG methods should preserve and compose subset-level invariances rather than always enforce a single invariant space across all domains. We discuss failure modes, including reliance on reliable routing specialization and additional alignment cost, in Appendix~\ref{sec:limitations}.

\clearpage
\bibliographystyle{plainnat}
\bibliography{ref}

\clearpage
\appendix

\section{Proofs}
\label{app:proofs}

\subsection{Proof of Proposition~\ref{prop:domain-expansion}}
\label{app:proof_domain_expansion}

\begin{proof}
Since $\mathcal{K}\subseteq \mathcal{K}'$, we have
\begin{align}
    \mathcal{P}(\mathcal{K})\subseteq \mathcal{P}(\mathcal{K}').
\end{align}
Therefore,
\begin{align}
&\sum_{(i,j)\in\mathcal{P}(\mathcal{K}')}
I\!\left(Z;D\,\middle|\,Y,D\in\{i,j\}\right) \nonumber\\
&=
\sum_{(i,j)\in\mathcal{P}(\mathcal{K})}
I\!\left(Z;D\,\middle|\,Y,D\in\{i,j\}\right)
+
\sum_{(i,j)\in\mathcal{P}(\mathcal{K}')\setminus\mathcal{P}(\mathcal{K})}
I\!\left(Z;D\,\middle|\,Y,D\in\{i,j\}\right).
\end{align}
Every conditional mutual information term is nonnegative. Hence the
second sum is nonnegative, which gives
\begin{align}
\sum_{(i,j)\in\mathcal{P}(\mathcal{K}')}
I\!\left(Z;D\,\middle|\,Y,D\in\{i,j\}\right)
\ge
\sum_{(i,j)\in\mathcal{P}(\mathcal{K})}
I\!\left(Z;D\,\middle|\,Y,D\in\{i,j\}\right).
\end{align}
Multiplying by $-\lambda$ with $\lambda>0$ and adding $I(Z;Y)$ yields
\begin{align}
    \mathcal{J}_{\mathcal{K}'}(\phi)
    \le
    \mathcal{J}_{\mathcal{K}}(\phi).
\end{align}
Taking the supremum over $\phi$ on both sides gives
\begin{align}
    V_{\mathcal{K}'}\le V_{\mathcal{K}}.
\end{align}
\end{proof}

\subsection{Proof of Proposition~\ref{prop:pairwise-invariance}}
\label{app:proof_pairwise}

\begin{proof}
Under the restricted event $\{D\in\{i,j\}\}$, the domain variable takes
only two values. By the standard characterization of conditional mutual
information,
\begin{align}
I\!\left(Z;D\,\middle|\,Y,D\in\{i,j\}\right)=0
\end{align}
if and only if
\begin{align}
Z\perp D \vert Y,\;D\in\{i,j\}.
\end{align}
This conditional independence means that, for each label value, the
conditional distribution of $Z$ is the same in the two-domain
subpopulation:
\begin{align}
P(Z\vert Y,D=i,D\in\{i,j\})
=
P(Z\vert Y,D=j,D\in\{i,j\}).
\end{align}
Since $\{D=i\}$ and $\{D=j\}$ are subsets of $\{D\in\{i,j\}\}$, this is
equivalent to
\begin{align}
    P(Z\vert Y,D=i)=P(Z\vert Y,D=j).
\end{align}
The converse follows by reversing the same argument.
\end{proof}

\subsection{Proof of Theorem~\ref{thm:shrinking-set}}
\label{app:proof_shrinking_set}

\begin{proof}
By Proposition~\ref{prop:pairwise-invariance}, for every
$(i,j)\in\mathcal{P}(\mathcal{K})$,
\begin{align}
    P(Z\vert Y,D=i)=P(Z\vert Y,D=j).
\end{align}
Thus all domains in $\mathcal{K}$ share the same conditional law of $Z$
given $Y$. Denote this common law by $Q_Y$. Then for every
$i\in\mathcal{K}$,
\begin{align}
    P(Z\vert Y,D=i)=Q_Y.
\end{align}
Marginalizing over domains gives
\begin{align}
P(Z\vert Y)
&=
\sum_{i\in\mathcal{K}}P(Z\vert Y,D=i)P(D=i\vert Y) \nonumber\\
&=
\sum_{i\in\mathcal{K}}Q_Y P(D=i\vert Y)
=
Q_Y.
\end{align}
Therefore,
\begin{align}
P(Z\vert Y,D=i)=P(Z\vert Y)
\qquad
\forall i\in\mathcal{K},
\end{align}
which is exactly $Z\perp D\vert Y$ over $D\in\mathcal{K}$.
\end{proof}

\subsection{Proof of Theorem~\ref{thm:monotonicity}}
\label{app:proof_monotonicity}

\begin{proof}
Under the exact invariance assumption, every pairwise penalty term
vanishes at $\phi_{\mathcal{K}}^{*}$. Therefore,
\begin{align}
V_{\mathcal{K}}
=
\mathcal{J}_{\mathcal{K}}(\phi_{\mathcal{K}}^{*})
=
I(Z_{\mathcal{K}}^{*};Y).
\end{align}
Similarly,
\begin{align}
V_{\mathcal{K}'}
=
I(Z_{\mathcal{K}'}^{*};Y).
\end{align}
From Proposition~\ref{prop:domain-expansion}, if
$\mathcal{K}\subseteq\mathcal{K}'$, then
$V_{\mathcal{K}'}\le V_{\mathcal{K}}$. Substituting the two identities
above yields
\begin{align}
    I(Z_{\mathcal{K}'}^{*};Y)
    \le
    I(Z_{\mathcal{K}}^{*};Y).
\end{align}
\end{proof}

\subsection{Proof of Lemma~\ref{lemma:subset}}

\begin{proof}
Fix a subset index $m$ with $P(S=m)>0$. By the standard characterization of conditional mutual information,
\begin{align}
I(Z^{(m)};D\vert Y,S=m)=0
\end{align}
if and only if $Z^{(m)} \perp D \vert Y,S=m$. This conditional independence holds if and only if, for every label value $y$ and every domain $i$ with
$P(D=i\vert Y=y,S=m)>0$,
\begin{align}
P(Z^{(m)}\vert  Y=y,D=i,S=m)
=
P(Z^{(m)}\vert Y=y,S=m).
\end{align}
Therefore, for any two domains $i,j$ with positive probability under subset $m$,
\begin{align}
P(Z^{(m)}\vert Y=y,D=i,S=m)
=
P(Z^{(m)}\vert Y=y,S=m)
=
P(Z^{(m)}\vert Y=y,D=j,S=m).
\end{align}
This proves the forward direction.
Conversely, suppose that for all domains $i,j$ with positive probability under $S=m$,
\begin{align}
P(Z^{(m)}\vert Y,D=i,S=m)
=
P(Z^{(m)}\vert Y,D=j,S=m).
\end{align}
Then the conditional distribution of $Z^{(m)}$ given $Y$ and $S=m$ is the same for all such domains. Denote this common distribution by $Q_y$. For any domain $i$ with positive probability,
\begin{align}
P(Z^{(m)}\vert Y=y,D=i,S=m)=Q_y.
\end{align}
Moreover,
\begin{align}
P(Z^{(m)}\vert Y=y,S=m)
=
\sum_{k} P(Z^{(m)}\vert Y=y,D=k,S=m)P(D=k\vert Y=y,S=m)
=
Q_y.
\end{align}
Thus,
\begin{align}
P(Z^{(m)}\vert Y=y,D=i,S=m)
=
P(Z^{(m)}\vert Y=y,S=m),
\end{align}
which implies
\begin{align}
Z^{(m)} \perp D \vert Y,S=m.
\end{align}
Hence,
\begin{align}
I(Z^{(m)};D\vert Y,S=m)=0.
\end{align}
\end{proof}

\subsection{Proof of Corollary}

\begin{proof}
By Lemma~\ref{lemma:subset}, subset-conditioned invariance for subset $m$,
\begin{align}
I(Z^{(m)};D\vert Y,S=m)=0,
\end{align}
is equivalent to equality of the class-conditional feature distributions
\begin{align}
P(Z^{(m)}\vert Y,D=i,S=m)
=
P(Z^{(m)}\vert Y,D=j,S=m)
\end{align}
for all domain pairs $i,j$ that have positive probability in subset $m$.

Therefore, enforcing equality of these distributions for every such pair is sufficient to obtain subset-conditioned invariance. Conversely, subset-conditioned invariance implies that all such pairwise class-conditional distributions are equal. Hence subset-conditioned invariance can be enforced through pairwise class-conditional alignment within each subset.
\end{proof}

\subsection{Proof on Theorem~\ref{thm:subset-invariance}}
\begin{proof}
We give a constructive example. Consider four domains partitioned into two subsets,
\begin{align}
\mathcal D_1=\{1,2\}, \qquad \mathcal D_2=\{3,4\}.
\end{align}
Let the label be binary, $Y\in\{-1,+1\}$, with $P(Y=1)=P(Y=-1)=1/2$.

For domains in $\mathcal D_1$, suppose the label-relevant feature is
\begin{align}
X_1=Y,
\end{align}
while another feature $X_2$ is independent noise. For domains in $\mathcal D_2$, suppose the label-relevant feature is
\begin{align}
X_2=Y,
\end{align}
while $X_1$ is independent noise. Thus, the predictive mechanism is shared within each subset but differs across subsets.

Now impose global conditional invariance on a single representation $Z=g(X)$:
\begin{align}
I(Z;D\vert Y)=0.
\end{align}
This requires the conditional distribution $P(Z\vert Y,D)$ to be the same across all four domains. In particular, for fixed $Y=y$, the distribution of $Z$ in domains where $X_1$ is predictive must match the distribution of $Z$ in domains where $X_1$ is noise. Therefore, any component of $Z$ that preserves the subset-specific predictive role of $X_1$ in $\mathcal D_1$ would make $P(Z\vert Y,D)$ differ between $\mathcal D_1$ and $\mathcal D_2$, violating global invariance. The same argument applies to $X_2$: preserving its subset-specific predictive role in $\mathcal D_2$ would distinguish domains in $\mathcal D_2$ from domains in $\mathcal D_1$.

Hence, a globally invariant representation cannot simultaneously preserve both subset-specific predictive factors while satisfying $I(Z;D\vert Y)=0$. It must suppress at least one subset-specific predictive factor to make the conditional representation distribution identical across all domains.

By contrast, define a latent subset variable
\begin{align}
S=1 \quad \text{for } D\in\mathcal D_1,
\qquad
S=2 \quad \text{for } D\in\mathcal D_2.
\end{align}
Let
\begin{align}
Z^{(1)}=X_1,
\qquad
Z^{(2)}=X_2.
\end{align}
Within subset $\mathcal D_1$, $X_1=Y$ in both domains, so
\begin{align}
P(Z^{(1)}\vert Y,D=1,S=1)
=
P(Z^{(1)}\vert Y,D=2,S=1).
\end{align}
Therefore,
\begin{align}
I(Z^{(1)};D\vert Y,S=1)=0.
\end{align}
Similarly, within subset $\mathcal D_2$, $X_2=Y$ in both domains, so
\begin{align}
I(Z^{(2)};D\vert Y,S=2)=0.
\end{align}
Thus, the subset-conditioned representations satisfy subset-conditioned invariance while preserving the predictive factor specific to each subset. This establishes that subset-conditioned invariance can preserve subset-specific predictive structure that global invariance may suppress.
\end{proof}

\section{Details of the MESSI MoE Architecture}
\label{app:messi_architecture}
\subsection{MoE Architecture}
\label{sec:messi-architecture}
\label{app:moe_architecture}

MESSI implements subset-shared invariance with a sparse MoE representation learner. Let \(u=b_\phi(x)\) be the backbone feature. A router produces \(\pi(x)\in\Delta^M\), and each expert \(h_m\) produces \(z^{(m)}=h_m(u)\). The prediction uses the routed mixture
\begin{align}
z(x)=\sum_{m=1}^{M}\pi_m(x)z^{(m)},
\qquad
\hat y=f(z(x)).
\end{align}
The subset-conditioned invariance loss is applied to the expert features \(\{z^{(m)}\}_{m=1}^{M}\). For each expert, class, and domain pair, the aggregated routing mass defines the alignment weight. Thus, routing determines which class-conditional domain pairs each expert aligns, while the classifier predicts from their routed composition.

We follow the sparse MoE design of GMoE~\citep{Li2022SparseMA}. In selected ViT/DeiT blocks, the feed-forward network is replaced by a sparse MoE layer
\begin{align}
f_{\mathrm{MoE}}(x)=\sum_{m=1}^{M}G_m(x)E_m(x),
\end{align}
where \(E_m\) is the \(m\)-th FFN expert and \(G(x)\) is a sparse top-\(k\) routing distribution. We use the same cosine router and sparse inference pattern as GMoE. Unless otherwise stated, all MoE-based variants use the same backbone, expert count, routing configuration, and MoE placement, so the ablations isolate the training objective rather than model capacity.

\subsection{Scalability of Subset-Conditioned OT}
\label{sec:ssi_ot_scalability}

The subset-conditioned OT loss is computed on the current source minibatch. It does not use target-domain samples, memory banks, queues, cached features, cross-batch accumulation, or moving-average feature estimates. At each step, we concatenate the source-domain minibatches and compute labels \(y\), source-domain indices \(d\), routing probabilities \(\pi_m(x)\), and per-expert features \(h_m(x)\). For expert \(m\), class \(c\), and source domain \(i\), define
\begin{align}
\mathcal Z^{(m)}_{i,c}=\{h_m(x):y=c,d=i\},
\qquad
n_{i,c}=|\mathcal Z^{(m)}_{i,c}|.
\end{align}
Only active minibatch cells are used. A class \(c\) contributes only if it appears in at least two source-domain samples in the concatenated minibatch. A domain pair \((i,j)\) contributes only if \(n_{i,c}>0\) and \(n_{j,c}>0\). Missing class-domain cells are skipped. Singleton class-domain groups are allowed, since empirical OT is well-defined for non-empty empirical distributions.

For each active tuple \((m,i,j,c)\), the OT term is weighted by the minibatch routing responsibility
\begin{align}
a^{(m)}_{ijc}
=
\sigma\!\left(\alpha\,\bar{\pi}^{(m)}_{i,c}\right)
\sigma\!\left(\alpha\,\bar{\pi}^{(m)}_{j,c}\right),
\qquad
\bar{\pi}^{(m)}_{i,c}
=
\frac{1}{n_{i,c}}
\sum_{x:y=c,d=i}\pi_m(x).
\end{align}
The routing averages are computed from the current minibatch and detached before weighting the OT discrepancy. Thus, routing selects which class-domain pairs are aligned, while the router itself is trained through the classification objective and routing regularizers.

For each active pair, we compute entropic OT with a differentiable log-space Sinkhorn solver, uniform empirical marginals, and a squared Euclidean cost matrix over expert features. The entropic regularization coefficient and the number of Sinkhorn iterations \(T\) are fixed. Let \(\mathcal C_B\) be the set of active classes in the minibatch, and let
\begin{align}
\mathcal P_B(c)=\{(i,j):i<j,\ n_{i,c}>0,\ n_{j,c}>0\}
\end{align}
be the active source-domain pairs for class \(c\). The minibatch-level complexity is
\begin{align}
O\!\left(
M T
\sum_{c\in\mathcal C_B}
\sum_{(i,j)\in\mathcal P_B(c)}
n_{i,c}n_{j,c}
\right),
\end{align}
plus the same pairwise cost-matrix construction without the factor \(T\). The cost therefore scales with the number and size of active class-domain pairs in the minibatch, not with the total number of dataset classes. This distinction matters for large-class datasets such as DomainNet, where most classes are absent from any single minibatch. The worst case remains expensive when many samples from the same class appear across multiple source domains, which explains the higher training-time cost of the OT variant in Table~\ref{tab:training_time}.

The OT loss is disabled at test time. Inference uses only the encoder, router, expert heads, and classifier, and therefore requires no Sinkhorn iterations or pairwise class-domain computations.

\paragraph{MMD variant for computational comparison.}
We also implement an MMD variant that keeps the same backbone, router, experts, active-pair selection rule, and auxiliary losses, replacing only the OT discrepancy. For each active domain pair \((i,j)\in\mathcal P_B(c)\), the loss is
\begin{align}
\mathcal L_{\mathtt{ssi}}^{\mathrm{MMD}}
=
\sum_{m=1}^{M}
\sum_{c\in\mathcal C_B}
\sum_{(i,j)\in\mathcal P_B(c)}
a^{(m)}_{ijc}
\,\mathrm{MMD}^{2}
\left(
\mathcal Z^{(m)}_{i,c},
\mathcal Z^{(m)}_{j,c}
\right).
\end{align}
With a Gaussian RBF kernel \(k\),
\begin{align}
\mathrm{MMD}^{2}(P,Q)
=
\mathbb E_{z,z'\sim P}k(z,z')
+
\mathbb E_{\tilde z,\tilde z'\sim Q}k(\tilde z,\tilde z')
-
2\mathbb E_{z\sim P,\tilde z\sim Q}k(z,\tilde z).
\end{align}
For groups of size \(n_{i,c}\) and \(n_{j,c}\), MMD requires
\(O(n_{i,c}^{2}+n_{j,c}^{2}+n_{i,c}n_{j,c})\) kernel evaluations and no iterative solver. Entropic OT requires the pairwise cost matrix and \(T\) Sinkhorn iterations, with cost \(O(Tn_{i,c}n_{j,c})\). MMD therefore has a lower training-time constant factor, whereas OT provides geometry-aware matching at higher cost. Both variants have the same inference-time architecture because the discrepancy loss is used only during training.

\subsection{From Latent Subsets to Routing-Weighted Alignment}
\label{app:subset-alignment}
The formulation above uses an ideal latent subset variable $S$, whereas in practice subset membership is not observed. We approximate this latent variable with the learned router by interpreting
\begin{align}
\pi_m(x) \approx P(S=m\vert x).
\end{align}
Under this interpretation, the subset-conditioned invariance constraint
\begin{align}
I(Z^{(m)};D\vert Y,S=m)=0
\end{align}
is equivalent to matching the class-conditional distributions
\begin{align}
P(Z^{(m)}\vert Y=c,D=i,S=m)
=
P(Z^{(m)}\vert Y=c,D=j,S=m)
\end{align}
for domains assigned to subset $m$.
Because $S$ is latent, we cannot directly select the domain pairs belonging to each subset. Instead, we estimate the responsibility of expert $m$ for class $c$ in domain $k$ by the average routing mass
\begin{align}
\rho^{(m)}_{k,c}
=
\frac{1}{|\mathcal D_{k,c}|}
\sum_{(x,y)\in \mathcal D_k,\, y=c}
\pi_m(x).
\end{align}
The product-based weight
\begin{align}
a^{(m)}_{ijc}
=
\sigma(\alpha\rho^{(m)}_{i,c})
\sigma(\alpha\rho^{(m)}_{j,c})
\end{align}
then acts as a soft relaxation of the indicator that domains $i$ and $j$ both belong to subset $m$ for class $c$.
Thus, the empirical OT objective
\begin{align}
\sum_{m,c,i<j}
a^{(m)}_{ijc}
W_\varepsilon
\left(
\mathcal Z^{(m)}_{i,c},
\mathcal Z^{(m)}_{j,c}
\right)
\end{align}
can be viewed as a tractable relaxation of enforcing subset-conditioned invariance. Exact mutual-information constraints are replaced by empirical class-conditional distribution matching, and hard latent subset assignments are replaced by learned soft routing responsibilities.

\clearpage
\section{Experimental Protocols and Datasets}
\label{app:experimental_protocols}

This appendix specifies the datasets and evaluation protocols used in the main text. We first describe the DomainBed leave-one-domain-out protocol for standard OOD generalization. We then introduce Rotated-Colored MNIST, a controlled benchmark that disentangles shape-based prediction from rotation and color-label shortcut shifts, and define the fixed-budget domain-growth protocol used to vary source-domain heterogeneity without increasing the training budget. Finally, we describe the WILDS-iWildCam \citep{koh2021wilds,beery2021iwildcam} protocol for the additional many-domain camera-trap evaluation.

\subsection{DomainBed Protocol}
\label{app:domainbed_protocol}

We evaluate OOD generalization on DomainBed using the standard leave-one-domain-out protocol. For each dataset, one domain is selected as the target domain and all remaining domains are used for training. We follow the training-domain validation criterion: 20\% of each source domain is reserved for validation, and the checkpoint with the highest average source-validation accuracy is selected. Target-domain data are not used for training, validation, hyperparameter tuning, or model selection.

All results are averaged over three random seeds. For each target split, all methods use the same source domains, target domain, and evaluation protocol. The dataset-level score is the mean accuracy over all target domains.

\subsection{Rotated-Colored MNIST Construction}
\label{app:rotated_colored_mnist}

We construct Rotated-Colored MNIST as a controlled diagnostic for domain generalization. Each environment is defined by two nuisance factors: a rotation angle applied to the digit image and a color-label correlation applied after label noise. The prediction task is binary digit classification, so digit shape is the label-relevant factor, while rotation and color correlation define the environment shift.

\begin{wrapfigure}[30]{r}{0.49\textwidth}
\captionsetup{type=algorithm}
\caption{Rotated-Colored MNIST construction.}
\label{alg:rotated_colored_mnist}
\hrule
\begin{lstlisting}[style=meanflowlike]
# E: number of candidate environments
# Dth: rotation step
# Dp: color-correlation step

X, y10 = load_mnist()
X, y10 = shuffle(X, y10)

C = round(1.0 / Dp)
envs = []

for e in range(E):
    th = Dth * floor(e / C)
    p  = Dp  * (e % C)

    Xe, ye10 = split_env(X, y10, e)

    y0 = float(ye10 < 5)
    y  = xor(y0, Bernoulli(0.25))

    Xr = rotate(Xe, th)

    c = xor(y, Bernoulli(p))

    Xc = two_channel(Xr)
    Xc[arange(len(Xc)), 1 - c, :, :] = 0

    envs.append((Xc / 255.0, y))

return envs
\end{lstlisting}
\hrule
\end{wrapfigure}

Algorithm~\ref{alg:rotated_colored_mnist} summarizes the construction. We pool the MNIST training and test sets into \(N=70{,}000\) examples. The ten-way label is converted into a binary label by mapping digits \(\{0,1,2,3,4\}\) to class \(1\) and digits \(\{5,6,7,8,9\}\) to class \(0\). Let \(\bar y\) denote this clean binary label. Following Colored MNIST, we corrupt it with label noise \(\eta_y\sim\mathrm{Bernoulli}(0.25)\):
\begin{align}
y = \bar y \oplus \eta_y,
\label{eq:rotated_colored_label_noise}
\end{align}
where \(\oplus\) denotes XOR. This prevents the binary task from being perfectly deterministic.

\textbf{Environment parameterization.}
We construct candidate environments indexed by \(e\in\{0,\ldots,E-1\}\). Each environment has a rotation angle \(\theta_e\) and a color-flip probability \(p_e\). The grid is controlled by a rotation step \(\Delta_\theta\) and a color step \(\Delta_p\). Let \(C=\mathrm{round}(1/\Delta_p)\) be the number of color levels per rotation. Under the row-major layout,
\begin{align}
\theta_e
=
\Delta_\theta
\left\lfloor \frac{e}{C} \right\rfloor,
\qquad
p_e
=
\Delta_p (e \bmod C).
\label{eq:rotated_colored_env_grid}
\end{align}
Environments in the same row share the same rotation and differ only in color-label correlation, whereas moving across rows adds geometric variation.

In the fixed-budget domain-growth experiments, we use \(\Delta_\theta=45^\circ\) and \(\Delta_p=0.1\). Increasing the number of source environments therefore increases heterogeneity in both rotation and shortcut strength, without increasing the total training budget.

\textbf{Rotation and colorization.}
For each environment \(e\), images assigned to that environment are first rotated by \(\theta_e\) using bilinear interpolation. The rotated grayscale image is then duplicated into two channels. The active color channel is determined by the noisy label \(y\) and the environment-specific flip probability \(p_e\):
\begin{align}
c = y \oplus \eta_c,
\qquad
\eta_c\sim\mathrm{Bernoulli}(p_e).
\label{eq:rotated_colored_color_assignment}
\end{align}
The channel indexed by \(c\) is retained and the other channel is set to zero. The resulting input has shape \(2\times28\times28\) and is normalized to \([0,1]\).

The parameter \(p_e\) controls the color shortcut. When \(p_e=0\), color is perfectly correlated with the noisy label. When \(p_e=0.5\), color is independent of the label. When \(p_e\) approaches \(1\), the correlation is reversed.

\textbf{Purpose of the benchmark.}
This benchmark is used only as a diagnostic. Because the rotation angle and color-label correlation are known by construction, it allows us to test whether a method preserves shape-based predictive structure as source-domain heterogeneity increases, or instead collapses toward an overly restrictive globally aligned representation.

\subsection{Fixed-Budget Domain-Growth Protocol}
\label{app:domain_growth_protocol}

We use Rotated-Colored MNIST to study how DG methods behave as the number of source domains increases. The construction above defines the candidate environment pool; this protocol defines how source domains are sampled under a fixed training budget.

\textbf{Held-out target environment.}
For all domain-growth experiments, we fix the target environment to
\begin{align}
\theta_{\mathrm{test}}=0^\circ,
\qquad
p_{\mathrm{test}}=0.5.
\end{align}
Under \eqref{eq:rotated_colored_env_grid}, this corresponds to \(e_{\mathrm{test}}=5\) when \(\Delta_p=0.1\). Since \(p_{\mathrm{test}}=0.5\), color is independent of the binary label. The target domain therefore tests whether a method learns shape-based prediction rather than relying on the color shortcut.

\textbf{Fixed-budget source-domain expansion.}
We vary the number of source domains as
\begin{align}
K_s\in\{3,5,7,9,11,13,15,17\}.
\end{align}
For each \(K_s\), source domains are sampled from the candidate pool excluding \(e_{\mathrm{test}}\). The total number of training examples is fixed to \(B\), so each selected source domain contributes at most \(\lfloor B/K_s \rfloor\) examples. This separates the effect of increasing source-domain heterogeneity from simply increasing the amount of training data.

\textbf{Sampling, seeds, and model selection.}
For each \(K_s\), we sample \(R\) independent source-domain subsets. Each method is trained with three random seeds per subset, and we report mean and standard deviation over both subset sampling and training seeds. Model selection uses source-validation data only; the target domain is used only for final evaluation.

\begin{table}[h]
\caption{Fixed-budget domain-growth protocol on Rotated-Colored MNIST.}
\label{tab:domain_growth_protocol}
\centering
\small
\setlength{\tabcolsep}{7pt}
\begin{tabular}{lc}
\toprule
Setting & Value \\
\midrule
Source domains & \(K_s\in\{3,5,7,9,11,13,15,17\}\) \\
Total training examples & Fixed to \(B\) for all \(K_s\) \\
Max examples per source domain & \(\lfloor B/K_s \rfloor\) \\
Target environment & \(\theta_{\mathrm{test}}=0^\circ,\; p_{\mathrm{test}}=0.5\) \\
Subset samples & \(R\) per \(K_s\) \\
Training seeds & 3 per subset \\
Model selection & Source-validation only \\
Target usage & Final evaluation only \\
\bottomrule
\end{tabular}
\end{table}

\subsection{WILDS-iWildCam Protocol}
\label{app:iwildcam_protocol}

\textbf{Benchmark.}
We additionally evaluate MESSI on WILDS-iWildCam~\citep{koh2021wilds,beery2021iwildcam}, a real-world many-domain benchmark for camera-trap species recognition. Each example is an image captured by a static camera trap, the label is one of 182 species, and the domain is the camera-trap location. The training split contains 243 source locations with substantial variation in background, viewpoint, illumination, vegetation, and species frequency. This makes iWildCam a natural stress test for global invariance: predictive structure need not be shared uniformly across all camera locations.

\textbf{Splits and model selection.}
We follow the official WILDS five-split protocol, summarized in Table~\ref{tab:iwildcam_splits}. Models are trained on the \textit{train} split. The \textit{ID validation} and \textit{ID test} splits contain held-out images from source camera locations, whereas the \textit{OOD validation} and \textit{OOD test} splits contain images from camera traps disjoint from the training cameras. We select checkpoints using OOD validation and report OOD test macro F1 as the primary generalization metric.

\begin{table}[h]
\centering
\caption{
Split structure of WILDS-iWildCam. The domain is the camera-trap location.
}
\label{tab:iwildcam_splits}
\begin{tabular}{lll}
\toprule
Split & Camera locations & Role \\
\midrule
Train & Source camera traps & Optimization \\
ID Val & Source camera traps & In-distribution validation \\
ID Test & Source camera traps & In-distribution test \\
OOD Val & Unseen camera traps & OOD checkpoint selection \\
OOD Test & Unseen camera traps & Final OOD evaluation \\
\bottomrule
\end{tabular}
\end{table}

\textbf{Compared methods.}
We compare three MoE-based methods under the same DeiT-S/16 backbone, preprocessing pipeline, and six-expert configuration. GMoE~\citep{Li2022SparseMA} is the original sparse MoE baseline. OMoE~\citep{feng2025omoe} augments the MoE baseline with Gram--Schmidt orthogonalization and the importance-CV\({}^2\) load-balancing objective. MESSI uses the same backbone family and is trained with the proposed subset-conditioned invariance objective.

\textbf{Preprocessing.}
Images are resized to \(224\times224\) and normalized with ImageNet statistics. During training, we apply random resized cropping, horizontal flipping, color jittering, and random grayscale augmentation. This setup is lighter than the official WILDS-iWildCam high-resolution pipeline. We therefore use this experiment as a controlled comparison among related MoE variants, not as a direct reproduction of leaderboard-scale WILDS results.

\textbf{Many-domain batching.}
We treat each camera-trap location as a domain. Since the training split contains 243 source locations, using all domains in every minibatch is not tractable. We use stochastic domain-subset batching following the WILDS \texttt{n\_groups\_per\_batch} convention~\citep{koh2021wilds,beery2021iwildcam}. At each step, we sample \(K=4\) source locations uniformly without replacement and draw \(8\) images from each location, forming a balanced minibatch of \(32\) images. For MESSI, these \(K=4\) active domains define \(K(K-1)/2=6\) candidate domain pairs for \(\mathcal L_{\mathtt{ssi}}\).

\textbf{Optimization and metric.}
All methods are trained with Adam using learning rate \(3\times10^{-5}\) and zero weight decay. For MESSI, we set
\(\lambda_{\mathtt{ssi}}=0.01\),
\(\lambda_{\mathrm{sp}}=\lambda_{\mathrm{bal}}=0.02\),
\(\lambda_{\mathrm{div}}=0.02\), and routing-pair temperature \(\alpha=4.0\). For conditional MMD, we use a multi-scale RBF kernel with bandwidths \(\{1,2,4,8,16\}\). We report per-class macro F1, the standard WILDS metric for iWildCam. Macro F1 gives equal weight to each species and is less dominated by frequent classes such as empty frames.

\textbf{Training budget.}
All compared methods are trained for 150K steps. The longer schedule is used because the subset-conditioned invariance objective introduces an additional multi-domain alignment term that slows early optimization before stabilizing. We therefore interpret this experiment as a convergence-aware comparison under a shared backbone and preprocessing pipeline, rather than an equal-step compute benchmark.

\clearpage
\section{Extended Results}
\label{app:extended_results}

\subsection{Extended Results on DomainBed}
\label{app:domainBed_full}
\begin{table}[h]
\caption{%
OOD accuracy (\%) under the DomainBed training-domain validation criterion, averaged over three seeds. \textbf{MESSI-Ti} and \textbf{MESSI-S} use DeiT-Ti/16 and DeiT-S/16 backbones, respectively. Best results are in \textbf{bold}; second-best results are \underline{underlined}.
}
\label{tab:main_extend}
\centering
\small
\setlength{\tabcolsep}{4.5pt}
\renewcommand{\arraystretch}{1.05}
\begin{tabular*}{\textwidth}{@{\extracolsep{\fill}}llccccc}
\toprule
Method & Backbone & PACS $\uparrow$ & OfficeHome $\uparrow$ & TerraInc $\uparrow$ & DomainNet $\uparrow$ & Avg. $\uparrow$ \\
\midrule
ERM~\citep{Vapnik1991PrinciplesOR}
  & ResNet-50
  & 83.8\ms{0.8} & 66.6\ms{0.4} & 47.2\ms{0.8} & 41.6\ms{0.2} & 59.8 \\
IRM~\citep{arjovsky2019invariant}
  & ResNet-50
  & 81.1\ms{0.5} & 58.2\ms{0.2} & 38.7\ms{0.8} & 30.6\ms{1.0} & 52.2 \\
MMD~\citep{Li2018DomainGW}
  & ResNet-50
  & 81.4\ms{0.7} & 60.1\ms{0.5} & 42.2\ms{0.9} & 20.5\ms{0.3} & 51.1 \\
DANN~\citep{ganin2016domain}
  & ResNet-50
  & 79.4\ms{0.3} & 59.6\ms{0.4} & 38.1\ms{0.8} & 31.6\ms{0.1} & 52.2 \\
MixStyle~\citep{Zhou2021DomainGW}
  & ResNet-50
  & 82.7\ms{0.4} & 59.8\ms{0.7} & 41.1\ms{1.0} & 34.1\ms{0.1} & 54.4 \\
CORAL~\citep{Sun2016DeepCC}
  & ResNet-50
  & 83.5\ms{0.4} & 66.0\ms{0.5} & 43.8\ms{0.9} & 39.2\ms{0.3} & 58.1 \\
Fishr~\citep{Ram2021FishrIG}
  & ResNet-50
  & 85.5\ms{0.2} & 68.6\ms{0.2} & 47.4\ms{1.6} & 41.7\ms{0.3} & 60.8 \\
SAGM~\citep{Wang2023SharpnessAwareGM}
  & ResNet-50
  & 86.4\ms{1.2} & 69.4\ms{0.2} & 48.8\ms{1.0} & 43.2\ms{0.4} & 61.9 \\
LFME~\citep{Chen2024LFMEAS}
  & ResNet-50
  & 84.9\ms{0.4} & 68.5\ms{0.2} & \textbf{49.5}\ms{\textbf{0.8}} & 38.6\ms{0.2} & 60.4 \\
\midrule
ERM~\citep{Vapnik1991PrinciplesOR}
  & DeiT-S/16
  & 86.2\ms{0.1} & 72.2\ms{0.4} & 42.0\ms{0.8} & 47.3\ms{0.2} & 61.9 \\
DynMoE~\citep{Guo2024DynamicMO}
  & DeiT-S/16
  & 85.2\ms{0.4} & 73.4\ms{0.3} & 44.5\ms{0.6} & 45.9\ms{0.4} & 62.3 \\
GMoE~\citep{Li2022SparseMA}
  & DeiT-S/16
  & 87.3\ms{0.1} & 73.5\ms{0.1} & 48.2\ms{0.5} & \underline{47.8\ms{0.5}} & 64.2 \\
OMoE~\citep{feng2025omoe}
  & DeiT-S/16
  & 87.1\ms{0.3} & 73.2\ms{0.1} & 46.0\ms{0.5} & 46.1\ms{0.2} & 63.1 \\
\midrule
\rowcolor{gray!10}
\textbf{MESSI-Ti-OT (Ours)}
  & DeiT-Ti/16
  & 86.0\ms{0.3} & 69.6\ms{0.2} & 42.5\ms{1.1} & 43.3\ms{0.4} & 60.4 \\

\rowcolor{gray!10}
\textbf{MESSI-S-MMD (Ours)}
  & DeiT-S/16
  & \underline{89.3\ms{0.6}} & \underline{74.0\ms{0.3}} & 47.5\ms{0.4} & 46.2\ms{0.4} & \underline{64.3} \\

\rowcolor{gray!10}
\textbf{MESSI-S-OT (Ours)}
  & DeiT-S/16
  & \textbf{90.9}\ms{\textbf{0.7}} & \textbf{76.1}\ms{\textbf{0.2}} & \underline{49.3\ms{0.2}} & \textbf{48.6}\ms{\textbf{0.3}} & \textbf{66.2} \\
\bottomrule
\end{tabular*}
\end{table}

\subsection{DomainBed Per-Domain Results}
\label{app:domainbed_per_domain}

Tables~\ref{tab:app_pacs}--\ref{tab:app_domainnet} report per-target-domain
accuracy. Best results are shown in \textbf{bold}, and second-best results are
\underline{underlined}. The average column matches Table~\ref{tab:main}.

\begin{table}[h]
\caption{Per-domain accuracy on PACS.}
\label{tab:app_pacs}
\centering
\small
\setlength{\tabcolsep}{6pt}
\begin{tabular}{lccccc}
\toprule
Method & Art $\uparrow$ & Cartoon $\uparrow$ & Photo $\uparrow$ & Sketch $\uparrow$ & Avg. $\uparrow$ \\
\midrule
ERM
  & 86.2\ms{0.8} & 76.0\ms{0.7} & 95.9\ms{0.4} & 77.1\ms{0.9} & 83.8\ms{0.8} \\
IRM
  & 83.0\ms{0.6} & 74.5\ms{0.4} & 95.8\ms{0.3} & 71.1\ms{0.7} & 81.1\ms{0.5} \\
MMD
  & 84.0\ms{0.8} & 74.8\ms{0.6} & 95.7\ms{0.3} & 71.1\ms{1.0} & 81.4\ms{0.7} \\
CORAL
  & 85.2\ms{0.7} & 76.6\ms{1.0} & 95.1\ms{0.2} & 77.1\ms{0.8} & 83.5\ms{0.4} \\
Fishr
  & 87.4\ms{0.3} & 78.4\ms{0.4} & 97.7\ms{0.1} & 78.5\ms{0.5} & 85.5\ms{0.2} \\
SAGM
  & 87.5\ms{1.6} & 80.7\ms{1.4} & 96.4\ms{0.6} & 81.0\ms{3.2} & 86.4\ms{1.2} \\
DynMoE
  & 89.0\ms{0.3} & 81.2\ms{0.5} & \underline{98.9\ms{0.2}} & 71.7\ms{0.7} & 85.2\ms{0.4} \\
GMoE
  & 89.9\ms{0.4} & 83.7\ms{0.5} & 98.9\ms{0.1} & 76.7\ms{0.6} & 87.3\ms{0.1} \\
OMoE
  & 89.6\ms{0.4} & 83.4\ms{0.5} & 98.8\ms{0.2} & 76.6\ms{0.7} & 87.1\ms{0.3} \\
\rowcolor{gray!10}
\textbf{MESSI-Ti-OT (Ours)}
  & 88.3\ms{0.9} & 78.1\ms{1.1} & 98.1\ms{0.5} & 79.3\ms{2.5} & 86.0\ms{0.3} \\
\rowcolor{gray!10}
\textbf{MESSI-S-MMD (Ours)}
  & \underline{90.4\ms{0.6}} & \underline{86.7\ms{0.8}} & 98.7\ms{0.4} & \underline{81.4\ms{0.6}} & \underline{89.3\ms{0.6}} \\
\rowcolor{gray!10}
\textbf{MESSI-S-OT (Ours)}
  & \textbf{91.6}\ms{\textbf{0.3}} & \textbf{88.7}\ms{\textbf{0.9}} & \textbf{99.1}\ms{\textbf{0.3}} & \textbf{84.1}\ms{\textbf{1.4}} & \textbf{90.9}\ms{\textbf{0.7}} \\
\bottomrule
\end{tabular}
\end{table}
\clearpage

\begin{table}[h]
\caption{Per-domain accuracy on OfficeHome.}
\label{tab:app_officehome}
\centering
\small
\setlength{\tabcolsep}{6pt}
\begin{tabular}{lccccc}
\toprule
Method & Art $\uparrow$ & Clipart $\uparrow$ & Product $\uparrow$ & Real-World $\uparrow$ & Avg. $\uparrow$ \\
\midrule
ERM
  & 61.8\ms{0.8} & 52.5\ms{0.5} & 75.7\ms{0.4} & 76.4\ms{0.4} & 66.6\ms{0.4} \\
IRM
  & 52.1\ms{0.3} & 44.3\ms{0.2} & 65.5\ms{0.3} & 70.9\ms{0.2} & 58.2\ms{0.2} \\
MMD
  & 54.0\ms{0.5} & 46.0\ms{0.4} & 68.2\ms{0.5} & 72.2\ms{0.6} & 60.1\ms{0.5} \\
CORAL
  & 61.8\ms{0.4} & 52.7\ms{0.6} & 74.2\ms{0.5} & 75.3\ms{0.5} & 66.0\ms{0.5} \\
Fishr
  & 64.1\ms{0.2} & 54.9\ms{0.3} & 76.8\ms{0.3} & 78.6\ms{0.2} & 68.6\ms{0.2} \\
SAGM
  & 64.8\ms{0.7} & 55.3\ms{0.6} & 78.1\ms{0.7} & 79.4\ms{0.6} & 69.4\ms{0.2} \\
DynMoE
  & 71.6\ms{0.3} & 59.5\ms{0.4} & 79.6\ms{0.2} & 82.7\ms{0.3} & 73.4\ms{0.3} \\
GMoE
  & 71.5\ms{0.1} & 58.0\ms{0.3} & \underline{81.2\ms{0.5}} & \underline{83.2\ms{0.1}} & 73.5\ms{0.1} \\
OMoE
  & 71.1\ms{0.2} & 57.8\ms{0.3} & 80.9\ms{0.4} & 83.0\ms{0.2} & 73.2\ms{0.1} \\
\rowcolor{gray!10}
\textbf{MESSI-Ti-OT (Ours)}
  & 66.5\ms{0.5} & 54.2\ms{0.3} & 77.1\ms{0.7} & 80.6\ms{1.2} & 69.6\ms{0.2} \\
\rowcolor{gray!10}
\textbf{MESSI-S-MMD (Ours)}
  & \underline{74.2\ms{0.2}} & \underline{59.9\ms{0.3}} & 80.4\ms{0.4} & 81.5\ms{0.2} & \underline{74.0\ms{0.3}} \\
\rowcolor{gray!10}
\textbf{MESSI-S-OT (Ours)}
  & \textbf{76.1}\ms{\textbf{0.7}} & \textbf{61.2}\ms{\textbf{0.3}} & \textbf{82.2}\ms{\textbf{0.4}} & \textbf{84.9}\ms{\textbf{0.4}} & \textbf{76.1}\ms{\textbf{0.2}} \\
\bottomrule
\end{tabular}
\end{table}

\begin{table}[h]
\caption{Per-domain accuracy on TerraIncognita.}
\label{tab:app_terra}
\centering
\small
\setlength{\tabcolsep}{6pt}
\begin{tabular}{lccccc}
\toprule
Method & L100 $\uparrow$ & L38 $\uparrow$ & L43 $\uparrow$ & L46 $\uparrow$ & Avg. $\uparrow$ \\
\midrule
ERM
  & 50.8\ms{1.8} & 42.5\ms{0.7} & 57.9\ms{0.6} & 37.6\ms{1.2} & 47.2\ms{0.8} \\
IRM
  & 42.0\ms{1.0} & 31.2\ms{0.7} & 46.5\ms{0.6} & 35.1\ms{0.8} & 38.7\ms{0.8} \\
MMD
  & 47.2\ms{1.2} & 35.6\ms{0.8} & 50.6\ms{0.7} & 35.4\ms{1.0} & 42.2\ms{0.9} \\
CORAL
  & 46.1\ms{1.0} & 39.6\ms{3.2} & 53.3\ms{0.7} & 36.2\ms{0.8} & 43.8\ms{0.9} \\
Fishr
  & 52.1\ms{1.6} & \underline{42.5\ms{2.1}} & 56.7\ms{0.8} & 38.3\ms{1.5} & 47.4\ms{1.6} \\
SAGM
  & 52.7\ms{3.2} & \textbf{43.5}\ms{\textbf{4.0}} & \textbf{58.7}\ms{\textbf{0.7}} & 40.3\ms{1.4} & \underline{48.8\ms{1.0}} \\
DynMoE
  & 54.8\ms{0.8} & 36.4\ms{0.7} & 49.0\ms{0.5} & 37.8\ms{0.9} & 44.5\ms{0.6} \\
GMoE
  & 56.6\ms{0.7} & 36.2\ms{0.9} & \underline{58.0\ms{0.6}} & \underline{42.0\ms{0.8}} & 48.2\ms{0.5} \\
OMoE
  & 55.5\ms{0.8} & 34.2\ms{1.4} & 55.3\ms{0.6} & 39.0\ms{0.7} & 46.0\ms{0.5} \\
\rowcolor{gray!10}
\textbf{MESSI-Ti-OT (Ours)}
  & 53.6\ms{1.9} & 23.1\ms{1.2} & 52.2\ms{0.8} & 41.2\ms{0.3} & 42.5\ms{1.1} \\
\rowcolor{gray!10}
\textbf{MESSI-S-MMD (Ours)}
  & \underline{57.1\ms{0.5}} & 36.9\ms{0.5} & 54.8\ms{0.4} & 41.2\ms{0.3} & 47.5\ms{0.4} \\
\rowcolor{gray!10}
\textbf{MESSI-S-OT (Ours)}
  & \textbf{60.2}\ms{\textbf{0.7}} & 38.3\ms{0.5} & 56.7\ms{0.3} & \textbf{42.1}\ms{\textbf{0.2}} & \textbf{49.3}\ms{\textbf{0.2}} \\
\bottomrule
\end{tabular}
\end{table}

\begin{table}[h]
\caption{Per-domain accuracy on DomainNet.}
\label{tab:app_domainnet}
\centering
\small
\setlength{\tabcolsep}{4pt}
\resizebox{\textwidth}{!}{
\begin{tabular}{lccccccc}
\toprule
Method & Real $\uparrow$ & Clipart $\uparrow$ & Painting $\uparrow$ & Sketch $\uparrow$ & Infograph $\uparrow$ & Quickdraw $\uparrow$ & Avg. $\uparrow$ \\
\midrule
ERM
  & 61.2\ms{0.4} & 58.8\ms{0.3} & 46.7\ms{0.4} & 50.1\ms{0.7} & 19.4\ms{0.3} & 13.4\ms{0.2} & 41.6\ms{0.2} \\
IRM
  & 48.0\ms{0.8} & 42.5\ms{0.7} & 34.2\ms{0.8} & 36.8\ms{1.0} & 12.1\ms{0.5} & 10.0\ms{0.6} & 30.6\ms{1.0} \\
MMD
  & 33.8\ms{0.5} & 29.3\ms{0.4} & 23.0\ms{0.4} & 24.8\ms{0.5} & 6.7\ms{0.2} & 5.4\ms{0.2} & 20.5\ms{0.3} \\
CORAL
  & 57.6\ms{0.3} & 56.1\ms{0.3} & 44.8\ms{0.4} & 47.6\ms{0.6} & 18.3\ms{0.3} & 10.8\ms{0.3} & 39.2\ms{0.3} \\
Fishr
  & 60.0\ms{0.2} & 58.4\ms{0.1} & 47.1\ms{0.2} & 50.1\ms{0.2} & 20.6\ms{0.1} & 14.0\ms{0.1} & 41.7\ms{0.0} \\
SAGM
  & 61.9\ms{0.2} & 61.1\ms{0.3} & 49.7\ms{0.4} & 52.2\ms{0.4} & 20.5\ms{0.2} & 13.8\ms{0.6} & 43.2\ms{0.4} \\
DynMoE
  & 62.4\ms{0.3} & 60.1\ms{0.4} & 49.9\ms{0.3} & 52.0\ms{0.5} & 22.4\ms{0.2} & \textbf{28.6}\ms{\textbf{0.4}} & 45.9\ms{0.4} \\
GMoE
  & \underline{68.8\ms{0.3}} & \underline{66.1\ms{0.4}} & \underline{55.2\ms{0.4}} & \underline{55.1\ms{0.5}} & \underline{24.7\ms{0.5}} & 16.9\ms{0.4} & \underline{47.8\ms{0.5}} \\
OMoE
  & 66.6\ms{0.1} & 63.8\ms{0.2} & 53.1\ms{0.2} & 53.2\ms{0.4} & 23.5\ms{0.3} & 16.1\ms{0.2} & 46.1\ms{0.2} \\
\rowcolor{gray!10}
\textbf{MESSI-Ti-OT (Ours)}
  & 62.4\ms{0.5} & 60.1\ms{0.3} & 49.1\ms{0.3} & 49.5\ms{0.3} & 22.1\ms{0.4} & 16.4\ms{0.3} & 43.3\ms{0.4} \\
\rowcolor{gray!10}
\textbf{MESSI-S-MMD (Ours)}
  & 67.4\ms{0.3} & 64.9\ms{0.3} & 53.4\ms{0.5} & 53.7\ms{0.3} & 24.1\ms{0.5} & 13.7\ms{0.3} & 46.2\ms{0.4} \\
\rowcolor{gray!10}
\textbf{MESSI-S-OT (Ours)}
  & \textbf{69.9}\ms{\textbf{0.2}} & \textbf{66.9}\ms{\textbf{0.2}} & \textbf{55.7}\ms{\textbf{0.4}} & \textbf{55.9}\ms{\textbf{0.2}} & \textbf{25.6}\ms{\textbf{0.5}} & \underline{17.8\ms{0.3}} & \textbf{48.6}\ms{\textbf{0.3}} \\
\bottomrule
\end{tabular}
}
\end{table}
\clearpage

\subsection{Fixed-Budget Domain-Growth results}
\label{app:rotated_colored_mnist_results}

For the fixed-budget domain-growth experiment, we vary
\(K_s\in\{3,5,7,9,11,13,15,17\}\) while keeping the total source training budget and the held-out target environment fixed. The target has \(p_{\mathrm{test}}=0.5\), so color is independent of the binary label. Target accuracy therefore reflects whether a method learns shape-based prediction rather than relying on the color shortcut.

We summarize robustness using the peak target accuracy and the final accuracy at \(K_{\max}=17\). We define
\(\mathrm{Drop}=\mathrm{Peak}-\mathrm{Acc}(K_{\max})\) and
\(\mathrm{RelDrop}=\mathrm{Drop}/\mathrm{Peak}\). RelDrop prevents flat but low-performing methods from appearing robust simply because they never reach high accuracy.

\begin{table}[h]
\caption{
Fixed-budget domain-growth summary on Rotated-Colored MNIST. Drop is
\(\mathrm{Peak}-\mathrm{Acc}(K_{\max})\) with \(K_{\max}=17\), and RelDrop is
\(\mathrm{Drop}/\mathrm{Peak}\). Best results are shown in \textbf{bold}.
}
\label{tab:domain_growth_summary_app}
\centering
\small
\setlength{\tabcolsep}{6pt}
\begin{tabular}{lcccc}
\toprule
Method & Peak Acc. $\uparrow$ & $\mathrm{Acc}(K_{\max})$ $\uparrow$ & Drop $\downarrow$ & RelDrop $\downarrow$ \\
\midrule
CORAL ($\gamma{=}1$)   & 71.96 & 69.38 & 2.58 & 3.59\% \\
CORAL ($\gamma{=}10$)  & 71.64 & 68.77 & 2.87 & 4.01\% \\
CORAL ($\gamma{=}100$) & 70.10 & 66.46 & 3.64 & 5.19\% \\
CORAL ($\gamma{=}250$) & 68.75 & 63.67 & 5.08 & 7.39\% \\
CORAL ($\gamma{=}500$) & 67.69 & 59.78 & 7.91 & 11.69\% \\
\rowcolor{gray!10}
\textbf{MESSI (Ours)} & \textbf{75.31} & \textbf{73.45} & \textbf{1.86} & \textbf{2.47\%} \\
\bottomrule
\end{tabular}
\end{table}

\begin{table}[h]
\caption{
Fixed-budget source-domain expansion on Rotated-Colored MNIST for
\(K_s\in\{3,5,7,9\}\). The target environment and total source budget are fixed.
Best results are shown in \textbf{bold}.
}
\label{tab:domain_growth_full_table_low_K}
\centering
\small
\setlength{\tabcolsep}{6pt}
\begin{tabular}{lcccc}
\toprule
Method & $K_s=3$ $\uparrow$ & $K_s=5$ $\uparrow$ & $K_s=7$ $\uparrow$ & $K_s=9$ $\uparrow$ \\
\midrule
CORAL ($\gamma{=}1$)   & \textbf{50.84} & 50.79 & 68.45 & 71.96 \\
CORAL ($\gamma{=}10$)  & 50.79 & 50.79 & 68.80 & 71.64 \\
CORAL ($\gamma{=}100$) & 50.79 & 50.82 & 68.85 & 70.10 \\
CORAL ($\gamma{=}250$) & 50.79 & 50.86 & 65.65 & 68.75 \\
CORAL ($\gamma{=}500$) & 50.79 & 50.79 & 59.95 & 64.89 \\
\rowcolor{gray!10}
\textbf{MESSI (Ours)} & 50.79 & \textbf{57.14} & \textbf{74.00} & \textbf{75.31} \\
\bottomrule
\end{tabular}
\end{table}

\begin{table}[h]
\caption{
Continuation of Table~\ref{tab:domain_growth_full_table_low_K} for
\(K_s\in\{11,13,15,17\}\). Best results are shown in \textbf{bold}.
}
\label{tab:domain_growth_full_table_high_K}
\centering
\small
\setlength{\tabcolsep}{6pt}
\begin{tabular}{lcccc}
\toprule
Method & $K_s=11$ $\uparrow$ & $K_s=13$ $\uparrow$ & $K_s=15$ $\uparrow$ & $K_s=17$ $\uparrow$ \\
\midrule
CORAL ($\gamma{=}1$)   & 70.44 & 71.00 & 70.20 & 69.38 \\
CORAL ($\gamma{=}10$)  & 70.91 & 71.00 & 68.92 & 68.77 \\
CORAL ($\gamma{=}100$) & 69.10 & 68.59 & 66.35 & 66.46 \\
CORAL ($\gamma{=}250$) & 67.30 & 67.84 & 63.99 & 63.67 \\
CORAL ($\gamma{=}500$) & 67.69 & 66.17 & 62.81 & 59.78 \\
\rowcolor{gray!10}
\textbf{MESSI (Ours)} & \textbf{74.80} & \textbf{74.10} & \textbf{74.99} & \textbf{73.45} \\
\bottomrule
\end{tabular}
\end{table}

\textbf{Results.}
Table~\ref{tab:domain_growth_summary_app} shows that MESSI achieves the highest peak accuracy, the highest final accuracy at \(K_{\max}=17\), and the smallest relative drop. The full results in Tables~\ref{tab:domain_growth_full_table_low_K} and~\ref{tab:domain_growth_full_table_high_K} show the same pattern across individual source-domain budgets. At small \(K_s\), all methods are close to chance because the source environments provide limited evidence for separating shape from color. Once source diversity becomes informative, MESSI improves sharply and remains stable as \(K_s\) increases, whereas stronger CORAL penalties become increasingly brittle. This supports the role of routing-conditioned subset alignment under increasing domain heterogeneity.

\subsection{Additional Results on WILDS-iWildCam}
\label{app:iwildcam_results}

Table~\ref{tab:iwildcam_results} reports additional results on WILDS-iWildCam \citep{koh2021wilds, beery2021iwildcam} under the protocol in Appendix~\ref{app:iwildcam_protocol}. This experiment is a controlled many-domain comparison among closely related MoE variants, not a leaderboard-oriented WILDS evaluation.

\begin{table}[h]
\centering
\caption{
Additional results on WILDS-iWildCam. All methods use the same DeiT-S/16 backbone, \(224\times224\) preprocessing pipeline, and training budget. We report macro F1 on OOD and ID splits. OOD test is the primary evaluation on unseen camera locations. Best results are shown in \textbf{bold}.
}
\label{tab:iwildcam_results}
\begin{tabular}{lccccc}
\toprule
\multirow{2}{*}{Method}
& \multirow{2}{*}{Steps}
& \multicolumn{2}{c}{OOD F1 $\uparrow$}
& \multicolumn{2}{c}{ID F1 $\uparrow$} \\
\cmidrule(lr){3-4}\cmidrule(lr){5-6}
& & Val & Test & Val & Test \\
\midrule
OMoE
& 150K & 29.14 & 27.96 & 48.57 & 47.26 \\
GMoE
& 150K & 29.06 & 26.68 & 48.84 & 47.28 \\
\rowcolor{gray!10}
\textbf{MESSI (Ours)}
& 150K & \textbf{35.49} & \textbf{33.16} & \textbf{52.86} & \textbf{51.14} \\
\bottomrule
\end{tabular}
\end{table}

\textbf{Results.}
Table~\ref{tab:iwildcam_results} shows that MESSI achieves the best macro F1 on all four evaluation splits. On the primary OOD test split, MESSI improves over OMoE by \(+5.20\) F1 and over GMoE by \(+6.48\) F1. The same pattern appears on the ID test split, where MESSI improves over OMoE by \(+3.88\) F1 and over GMoE by \(+3.86\) F1. These gains are obtained under the same backbone, preprocessing pipeline, number of experts, and training budget, indicating that the improvement is not explained by generic sparse MoE capacity alone.

\textbf{Interpretation.}
The absolute scores are not intended to match heavily tuned WILDS leaderboard results, since we use lower-resolution inputs and a lightweight preprocessing pipeline. The purpose of this experiment is narrower: to test whether MESSI retains a relative advantage over closely related MoE baselines in a real many-domain setting. The consistent improvement on both OOD and ID splits supports the role of routing-conditioned subset alignment when domains are numerous, heterogeneous, and only partially share predictive structure.

\section{Ablations Isolate Routing-Conditioned Subset Alignment}
\label{app:ablations_mechanisms}

\subsection{Ablation on Training Objectives}
\label{app:objective_ablations}

The following ablations test whether MESSI's gains come from routing-induced subset-conditioned invariance rather than MoE capacity, generic feature alignment, or individual regularizers. Unless otherwise stated, all variants use the same backbone, number of experts, optimizer, augmentation pipeline, and model-selection criterion. We report OOD accuracy together with diagnostics tied to the intended role of each term: routing entropy for assignment confidence, expert-load standard deviation for usage imbalance, off-diagonal cosine similarity for expert redundancy, and class-conditional routing JS divergence for routing consistency. Diagnostics are computed on held-out source-validation splits; target-domain data are used only for final accuracy evaluation.

\begin{table}[h]
\caption{
Objective leave-one-out ablation on PACS, averaged over the four
leave-one-domain-out runs. Diagnostic columns are rescaled as indicated in
the headers. Best results are shown in \textbf{bold}.
}
\label{tab:objective_leave_one_out}
\centering
\small
\setlength{\tabcolsep}{3.5pt}
\resizebox{\textwidth}{!}{
\begin{tabular}{lccccc}
\toprule
Variant
& PACS Acc. $\uparrow$
& \shortstack{Routing Ent. $\downarrow$\\$(\times 10^{-2})$}
& \shortstack{Load Std. $\downarrow$\\$(\times 10^{-1})$}
& \shortstack{Offdiag Cos. $\downarrow$\\$(\times 10^{-2})$}
& \shortstack{Routing JS $\downarrow$\\$(\times 10^{-4})$} \\
\midrule
\textbf{MESSI (Ours)}
& \textbf{90.90}\ms{\textbf{0.70}}
& 0.7\ms{0.2}
& 3.6\ms{0.2}
& \textbf{-0.2}\ms{\textbf{0.3}}
& \textbf{0.1}\ms{\textbf{0.0}} \\

w/o $\mathcal L_{\mathtt{ssi}}$
& 86.89\ms{0.75}
& 1.2\ms{0.3}
& 3.5\ms{0.2}
& -0.1\ms{0.3}
& 2.0\ms{0.6} \\

w/o $\mathcal L_{\mathtt{sp}}$
& 87.06\ms{0.90}
& 53.2\ms{5.5}
& 3.0\ms{0.3}
& 0.0\ms{0.4}
& 1.0\ms{0.4} \\

w/o $\mathcal L_{\mathtt{bal}}$
& 88.82\ms{0.65}
& \textbf{0.3}\ms{\textbf{0.1}}
& 3.7\ms{0.3}
& -0.1\ms{0.3}
& \textbf{0.1}\ms{\textbf{0.1}} \\

w/o $\mathcal L_{\mathtt{div}}$
& 88.79\ms{0.70}
& 5.8\ms{1.2}
& 1.2\ms{0.2}
& 35.6\ms{3.0}
& 33.0\ms{4.0} \\

w/o $\mathcal L_{\mathtt{sp}},\mathcal L_{\mathtt{bal}},\mathcal L_{\mathtt{div}}$
& 88.59\ms{0.80}
& 39.7\ms{4.5}
& \textbf{1.0}\ms{\textbf{0.1}}
& 38.9\ms{3.5}
& 30.0\ms{3.5} \\
\bottomrule
\end{tabular}
}
\end{table}

\textbf{Metric definitions.}
Routing entropy is
\(\mathbb{E}_{x}[-\sum_m \pi_m(x)\log \pi_m(x)]\).
Load standard deviation is the standard deviation of average expert usage
\(\mathbb{E}_{x}[\pi_m(x)]\) across experts. Offdiag Cos. is the mean
off-diagonal cosine similarity between expert representations. Routing JS is
the average Jensen--Shannon divergence between domain-specific class routing
prototypes and their class-wise template. Load standard deviation should be read
together with routing entropy: a low load standard deviation can also arise from
diffuse routing that uses all experts weakly.

\textbf{Results.}
Table~\ref{tab:objective_leave_one_out} shows that the full objective achieves the highest PACS accuracy. Removing \(\mathcal L_{\mathtt{ssi}}\) gives the largest drop, from \(90.90\%\) to \(86.89\%\), indicating that routing-conditioned subset alignment is the main accuracy-critical component. Removing \(\mathcal L_{\mathtt{sp}}\) sharply increases routing entropy, so expert assignments become diffuse. Removing \(\mathcal L_{\mathtt{bal}}\) has a smaller effect but slightly increases load imbalance. Removing \(\mathcal L_{\mathtt{div}}\) strongly increases off-diagonal cosine similarity and routing JS, showing that experts become redundant and less class-consistent without explicit specialization. Removing all routing and specialization regularizers produces diffuse routing and high expert redundancy, with lower accuracy than the full model. Overall, \(\mathcal L_{\mathtt{ssi}}\) drives the OOD gain, while the auxiliary terms stabilize the sparse, balanced, and non-redundant expert decomposition.

\subsection{Expert Specialization}
\label{app:expert_specialization}

Figure~\ref{fig:app_expert_analysis} and Table~\ref{tab:expert_specialization}
analyze expert specialization on OfficeHome. The goal of this diagnostic is to
test whether MESSI learns distinct expert representations, rather than gaining
accuracy only from increased MoE capacity.

\begin{figure}[h]
  \centering
  \includegraphics[
    width=\linewidth,
    trim=16 8 8 8,
    clip
  ]{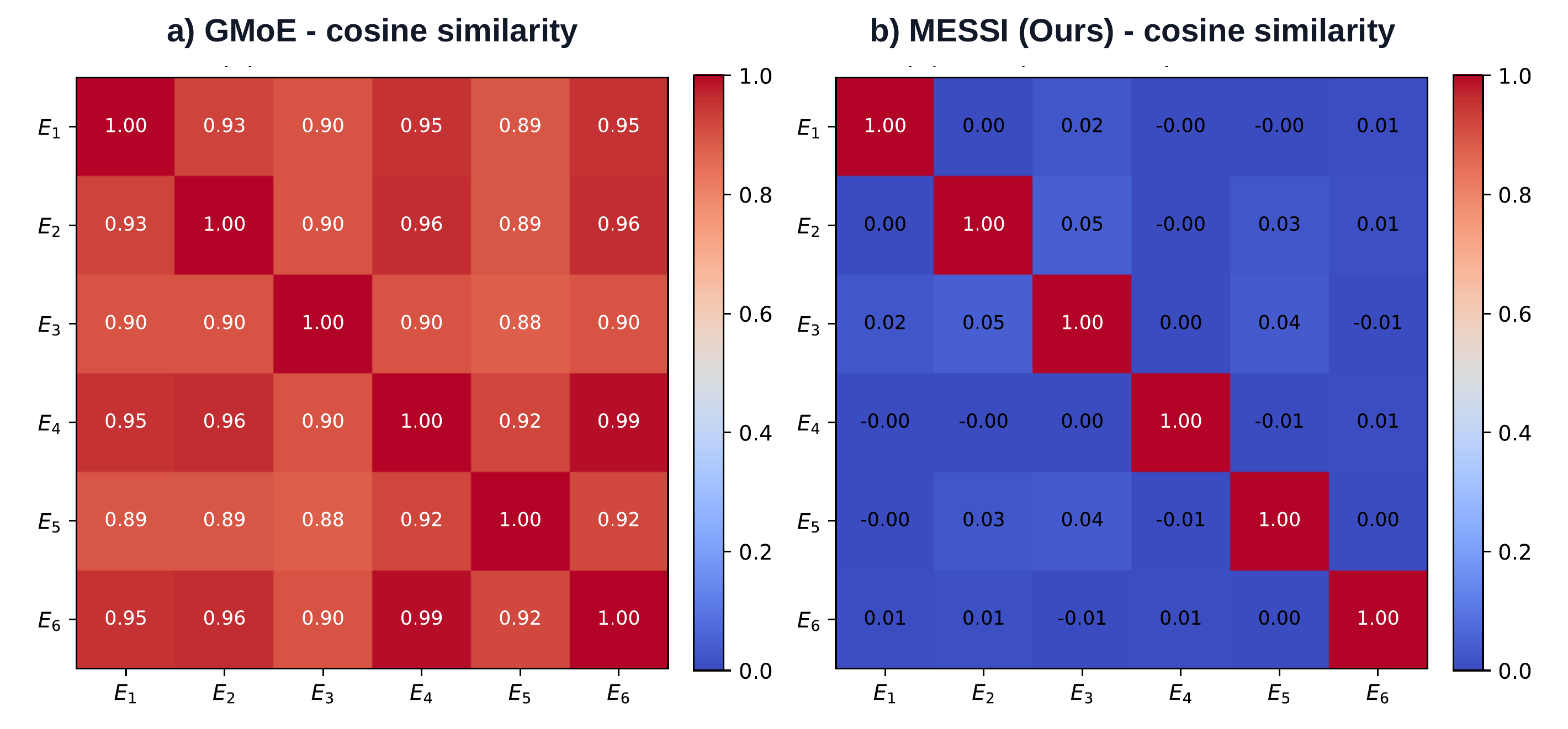}
  \caption{
  Expert specialization on OfficeHome.
  (a) Pairwise cosine similarity between expert representations in GMoE.
  (b) Pairwise cosine similarity between expert representations in MESSI.
  Lower off-diagonal similarity indicates stronger expert specialization.
  }
  \label{fig:app_expert_analysis}
\end{figure}

\textbf{Protocol.}
We follow the DomainBed leave-one-domain-out protocol on OfficeHome, using Art,
Clipart, Product, and Real-World as the held-out domains in turn. All
diagnostics are computed on the held-out target split, so the analysis reflects
expert behavior on unseen domains. We use OfficeHome because its four visually
diverse domains and fine-grained label space make expert redundancy easier to
observe than on smaller benchmarks such as PACS.

\textbf{Metrics.}
Both models use \(M=6\) experts. For each input \(x\), we extract the output of
each expert, denoted by \(h_m(x)\). For MESSI, these are the outputs of the MoE
head. For GMoE, we use the per-expert outputs from the final MoE layer before
routing aggregation. We quantify specialization with the pairwise cosine matrix
\begin{align}
C(x)_{m,n}=\cos\!\left(h_m(x),h_n(x)\right),
\end{align}
averaged over samples and then over held-out domains. Lower off-diagonal values
indicate less redundancy across experts.

Table~\ref{tab:expert_specialization} further reports three summary metrics.
Mean Offdiag Cos. is the average off-diagonal entry of the cosine matrix.
Effective rank is
\begin{align}
\mathrm{erank}
=
\exp\!\left(-\sum_i p_i\log p_i\right),
\qquad
p_i=\sigma_i\Big/\sum_j \sigma_j,
\end{align}
where \(\{\sigma_i\}\) are the singular values of the stacked expert-output
matrix. Higher effective rank indicates that expert outputs span a more diverse
subspace. Dead expert rate is the fraction of experts whose average routing mass
falls below a small threshold.

\begin{table}[h]
\caption{
Quantitative expert-specialization diagnostics on OfficeHome. Mean off-diagonal
cosine measures expert redundancy. Effective rank measures the diversity of the
expert-output subspace. Dead expert rate is the fraction of experts whose average
routing mass falls below a small threshold. Best results are shown in \textbf{bold}.
}
\label{tab:expert_specialization}
\centering
\small
\setlength{\tabcolsep}{5pt}
\begin{tabular}{lccc}
\toprule
Method
& Mean Offdiag Cos. $\downarrow$
& Effective Rank $\uparrow$
& Dead Expert Rate $\downarrow$ \\
\midrule
OMoE
& $0.35\ms{0.04}$
& $3.95\ms{0.12}$
& $0.45\ms{0.07}$ \\

GMoE
& $0.92\ms{0.03}$
& $3.64\ms{0.15}$
& $0.54\ms{0.06}$ \\

MESSI w/o $\mathcal L_{\mathtt{div}}$
& $0.50\ms{0.05}$
& $3.75\ms{0.14}$
& $0.40\ms{0.06}$ \\

\rowcolor{gray!10}
\textbf{MESSI (Ours)}
& $\mathbf{0.01}\ms{\mathbf{0.01}}$
& $\mathbf{4.21}\ms{\mathbf{0.11}}$
& $\mathbf{0.38}\ms{\mathbf{0.05}}$ \\
\bottomrule
\end{tabular}
\end{table}

\textbf{Results.}
Figure~\ref{fig:app_expert_analysis} shows a clear qualitative difference.
GMoE exhibits uniformly high off-diagonal cosine similarity, indicating that its
experts produce highly redundant representations. In contrast, MESSI yields a
near-diagonal matrix, showing that different experts encode substantially
different features.

Table~\ref{tab:expert_specialization} confirms this trend quantitatively. MESSI
reduces mean off-diagonal cosine similarity from \(0.92\) in GMoE to \(0.01\),
increases effective rank from \(3.64\) to \(4.21\), and lowers the dead expert
rate from \(0.54\) to \(0.38\). Removing \(\mathcal L_{\mathtt{div}}\) weakens
this effect: the off-diagonal cosine rises to \(0.50\) and the effective rank
drops to \(3.75\). OMoE improves over GMoE, but remains substantially less
specialized than the full MESSI model.

\textbf{Takeaway.}
These results show that MESSI does not improve simply by adding MoE capacity.
Instead, its routing and diversity objectives encourage experts to become both
distinct and active, yielding a less redundant and more expressive decomposition.

\subsection{Routing Diagnostics}
\label{app:routing_diagnostics}

We further analyze whether the learned router induces meaningful expert specialization rather than merely increasing model capacity. For each input, we record the routing distribution over experts from the last MoE block and aggregate it within each source domain. For a domain \(k\), we visualize three complementary routing statistics. First, \(P(e\vert d=k)\) measures the average expert usage of domain \(k\). Second, \(P_k(e\vert y)\) denotes the class-conditioned expert usage estimated only from samples in domain \(k\), i.e., \(P(e\vert y,d=k)\). Third, \(P(e\vert d=k,y)\) provides the same domain-conditioned class view in a format used to compare class-wise routing patterns across methods. These per-domain diagnostics reveal whether the router uses experts selectively, whether expert assignment is related to semantic classes, and whether the learned routing structure is dominated by domain-specific shortcuts.

Figures~\ref{fig:routing_diag_A}--\ref{fig:routing_diag_S} compare GMoE and MESSI on each PACS domain. Compared with GMoE, MESSI exhibits sharper and more class-structured routing patterns, indicating stronger expert selectivity and semantic specialization. At the same time, the class-conditioned routing visualizations allow us to check whether this specialization remains meaningful within each domain rather than collapsing into uniform expert usage or pure domain-level memorization.

\begin{figure*}[t]
  \centering

  \begin{subfigure}[t]{0.49\linewidth}
    \centering
    \includegraphics[width=\linewidth]{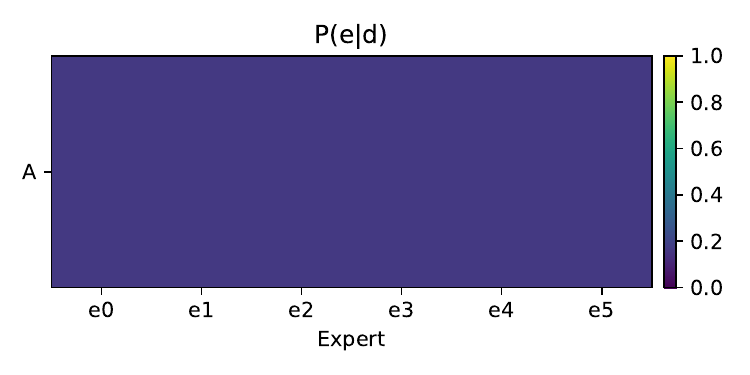}
    \caption{GMoE: \(P(e\vert d=A)\)}
  \end{subfigure}
  \hfill
  \begin{subfigure}[t]{0.49\linewidth}
    \centering
    \includegraphics[width=\linewidth]{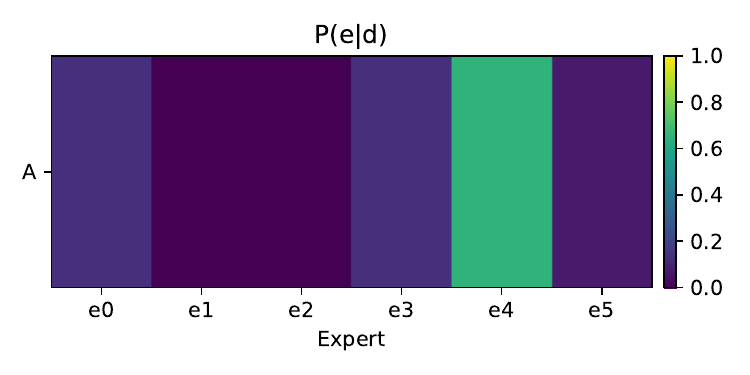}
    \caption{MESSI: \(P(e\vert d=A)\)}
  \end{subfigure}

  \medskip

  \begin{subfigure}[t]{0.49\linewidth}
    \centering
    \includegraphics[width=\linewidth]{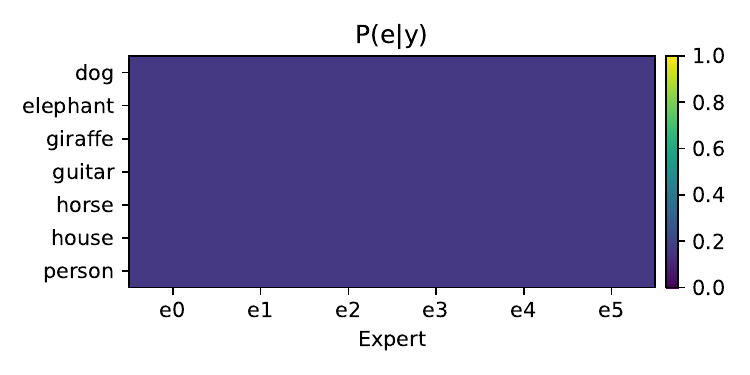}
    \caption{GMoE: \(P_A(e\vert y)\)}
  \end{subfigure}
  \hfill
  \begin{subfigure}[t]{0.49\linewidth}
    \centering
    \includegraphics[width=\linewidth]{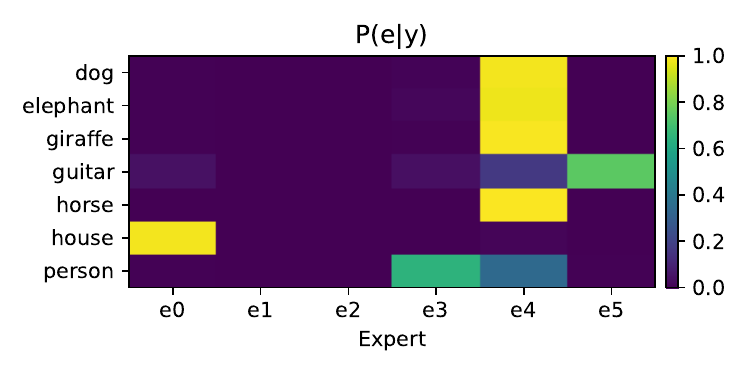}
    \caption{MESSI: \(P_A(e\vert y)\)}
  \end{subfigure}

  \medskip

  \begin{subfigure}[t]{0.49\linewidth}
    \centering
    \includegraphics[width=\linewidth]{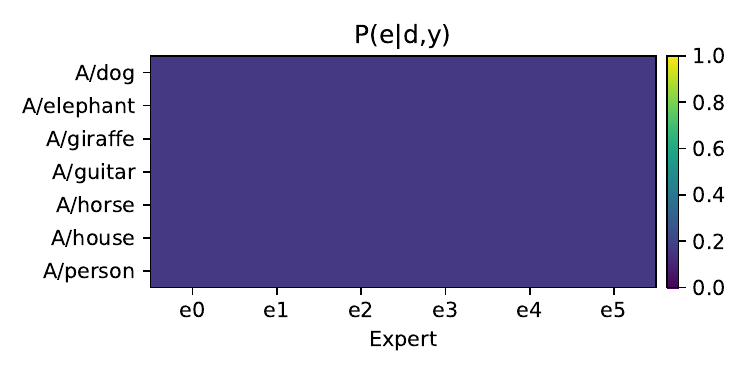}
    \caption{GMoE: \(P(e\vert d=A,y)\)}
  \end{subfigure}
  \hfill
  \begin{subfigure}[t]{0.49\linewidth}
    \centering
    \includegraphics[width=\linewidth]{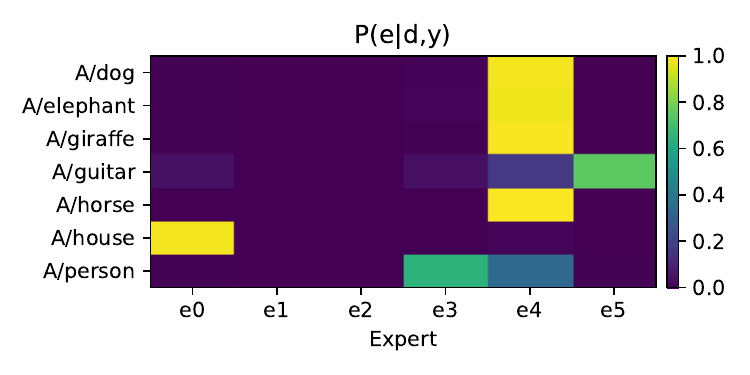}
    \caption{MESSI: \(P(e\vert d=A,y)\)}
  \end{subfigure}

  \caption{
  Routing diagnostics on the PACS Art domain. Rows correspond to routing aggregations and columns correspond to methods. \(P(e\vert d=A)\) measures domain-level expert usage. \(P_A(e\vert y)\) denotes class-conditioned expert usage computed only on domain \(A\). \(P(e\vert d=A,y)\) visualizes class-conditioned routing within the Art domain.
  }
  \label{fig:routing_diag_A}
\end{figure*}

\begin{figure*}[h]
  \centering

  \begin{subfigure}[t]{0.48\linewidth}
    \centering
    \includegraphics[width=\linewidth]{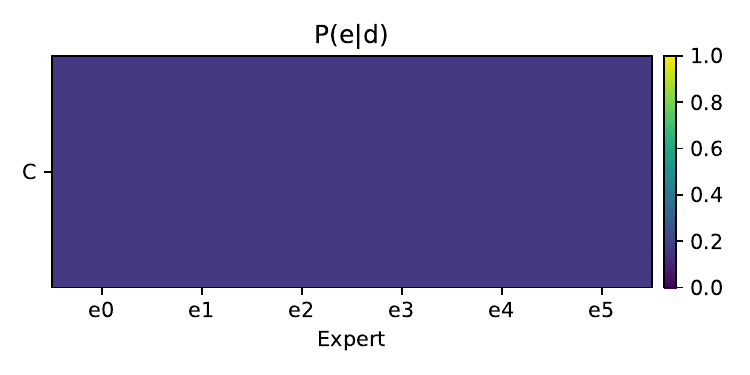}
    \caption{GMoE: \(P(e\vert d=C)\)}
  \end{subfigure}
  \hfill
  \begin{subfigure}[t]{0.48\linewidth}
    \centering
    \includegraphics[width=\linewidth]{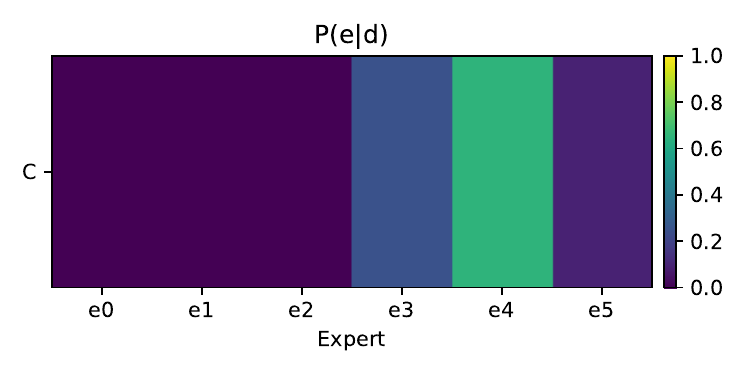}
    \caption{MESSI: \(P(e\vert d=C)\)}
  \end{subfigure}

  \medskip

  \begin{subfigure}[t]{0.48\linewidth}
    \centering
    \includegraphics[width=\linewidth]{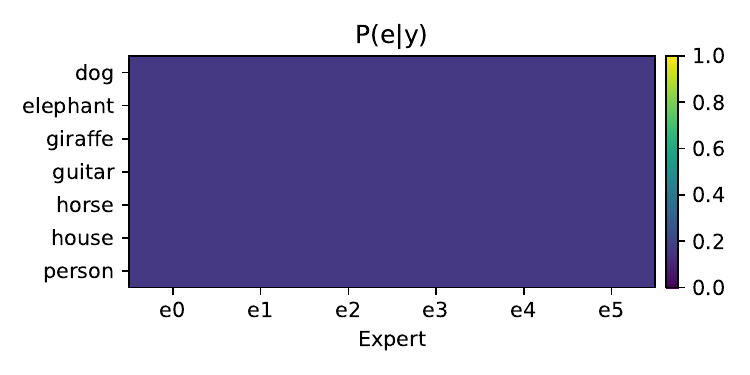}
    \caption{GMoE: \(P_C(e\vert y)\)}
  \end{subfigure}
  \hfill
  \begin{subfigure}[t]{0.48\linewidth}
    \centering
    \includegraphics[width=\linewidth]{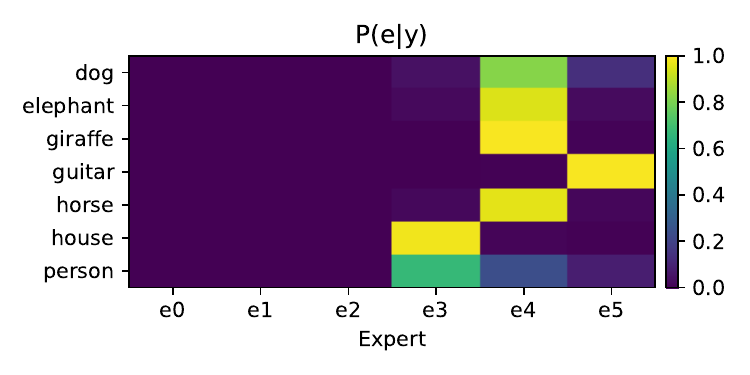}
    \caption{MESSI: \(P_C(e\vert y)\)}
  \end{subfigure}

  \medskip

  \begin{subfigure}[t]{0.48\linewidth}
    \centering
    \includegraphics[width=\linewidth]{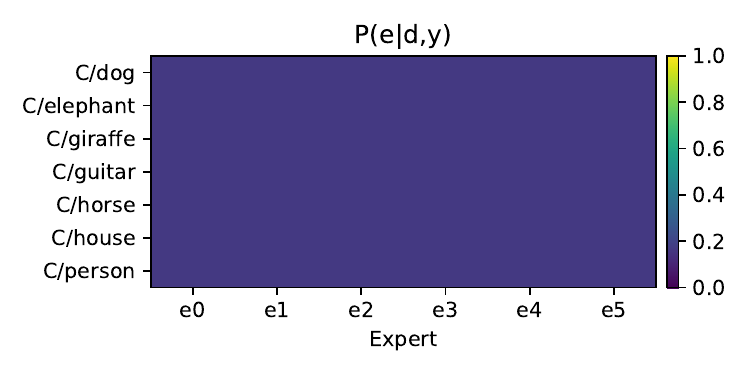}
    \caption{GMoE: \(P(e\vert d=C,y)\)}
  \end{subfigure}
  \hfill
  \begin{subfigure}[t]{0.48\linewidth}
    \centering
    \includegraphics[width=\linewidth]{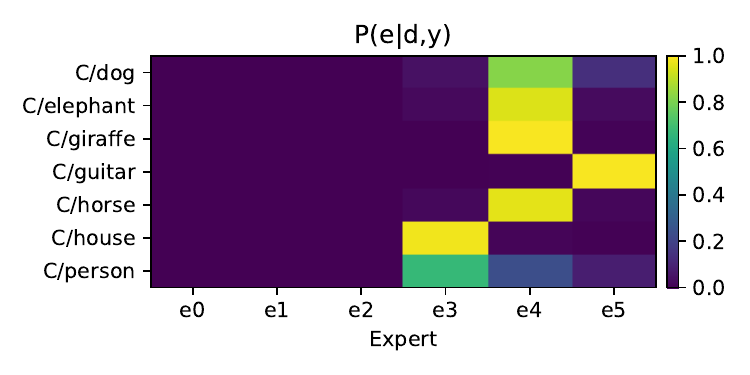}
    \caption{MESSI: \(P(e\vert d=C,y)\)}
  \end{subfigure}

  \caption{
  Routing diagnostics on the PACS Cartoon domain. Rows correspond to routing aggregations and columns correspond to methods. \(P(e\vert d=C)\) measures domain-level expert usage. \(P_C(e\vert y)\) denotes class-conditioned expert usage computed only on domain \(C\). \(P(e\vert d=C,y)\) visualizes class-conditioned routing within the Cartoon domain.
  }
  \label{fig:routing_diag_C}
\end{figure*}

\begin{figure*}[t]
  \centering

  \begin{subfigure}[t]{0.48\linewidth}
    \centering
    \includegraphics[width=\linewidth]{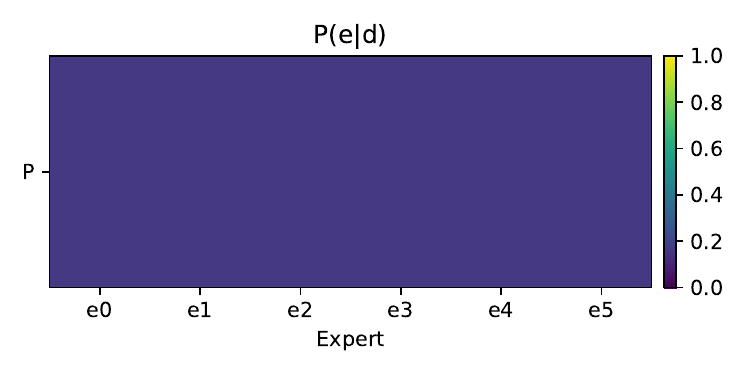}
  \end{subfigure}
  \hfill
  \begin{subfigure}[t]{0.48\linewidth}
    \centering
    \includegraphics[width=\linewidth]{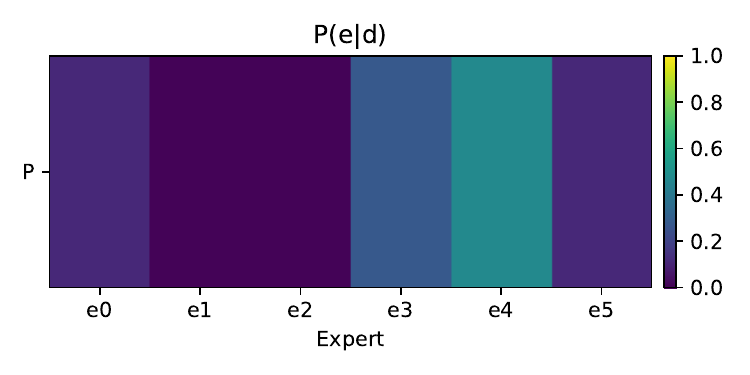}
  \end{subfigure}

  \medskip

  \begin{subfigure}[t]{0.48\linewidth}
    \centering
    \includegraphics[width=\linewidth]{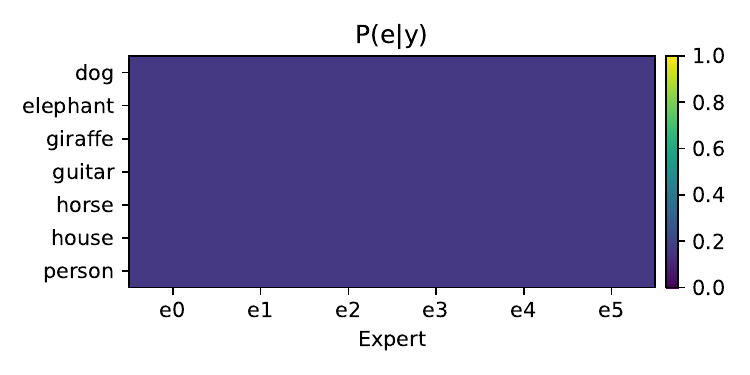}
  \end{subfigure}
  \hfill
  \begin{subfigure}[t]{0.48\linewidth}
    \centering
    \includegraphics[width=\linewidth]{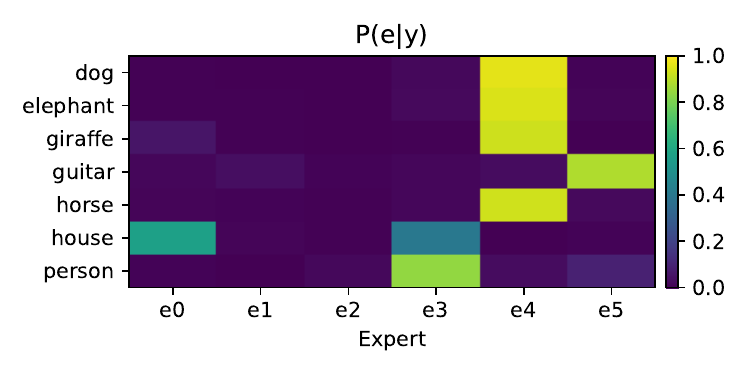}
  \end{subfigure}

  \medskip

  \begin{subfigure}[t]{0.48\linewidth}
    \centering
    \includegraphics[width=\linewidth]{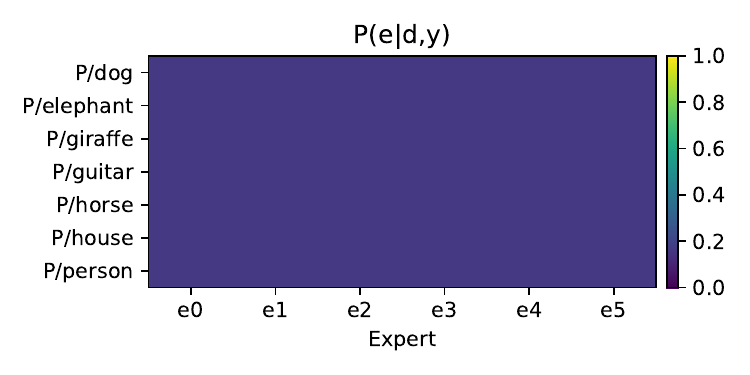}
  \end{subfigure}
  \hfill
  \begin{subfigure}[t]{0.48\linewidth}
    \centering
    \includegraphics[width=\linewidth]{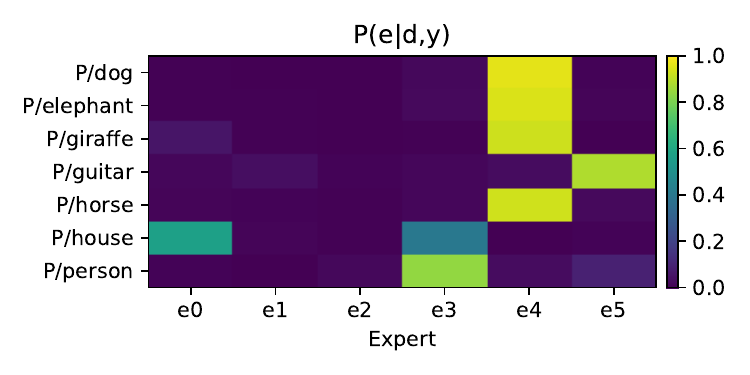}
  \end{subfigure}

  \caption{
  Routing diagnostics on the PACS Photo domain. In each row, the left panel shows GMoE and the right panel shows MESSI. From top to bottom, the rows show \(P(e\vert d=P)\), \(P_P(e\vert y)\), and \(P(e\vert d=P,y)\). These correspond to domain-level expert usage, class-conditioned expert usage computed on domain \(P\), and class-conditioned routing within the Photo domain, respectively.
  }
  \label{fig:routing_diag_P}
\end{figure*}

\begin{figure*}[t]
  \centering

  \begin{subfigure}[t]{0.48\linewidth}
    \centering
    \includegraphics[width=\linewidth]{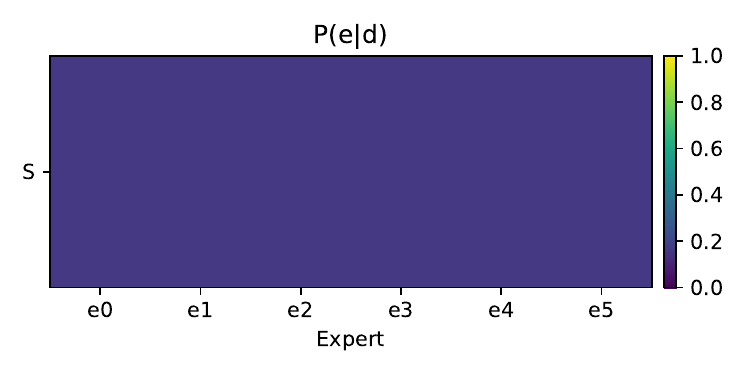}
  \end{subfigure}
  \hfill
  \begin{subfigure}[t]{0.48\linewidth}
    \centering
    \includegraphics[width=\linewidth]{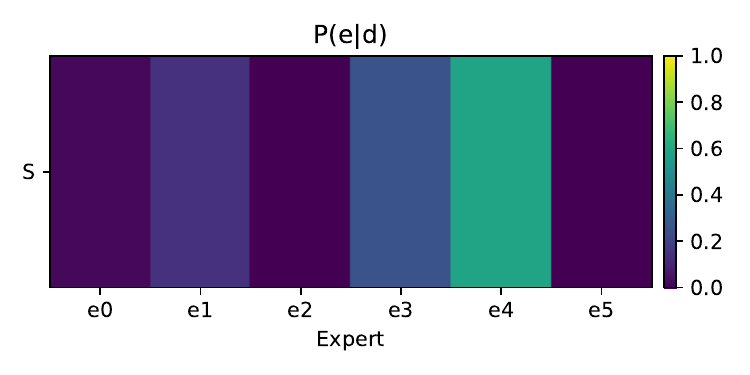}
  \end{subfigure}

  \medskip

  \begin{subfigure}[t]{0.48\linewidth}
    \centering
    \includegraphics[width=\linewidth]{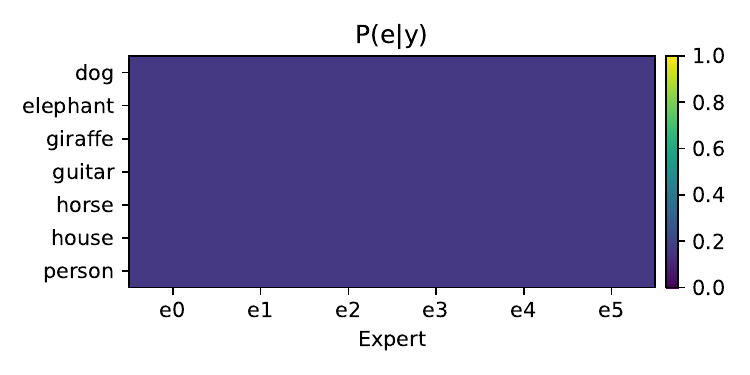}
  \end{subfigure}
  \hfill
  \begin{subfigure}[t]{0.48\linewidth}
    \centering
    \includegraphics[width=\linewidth]{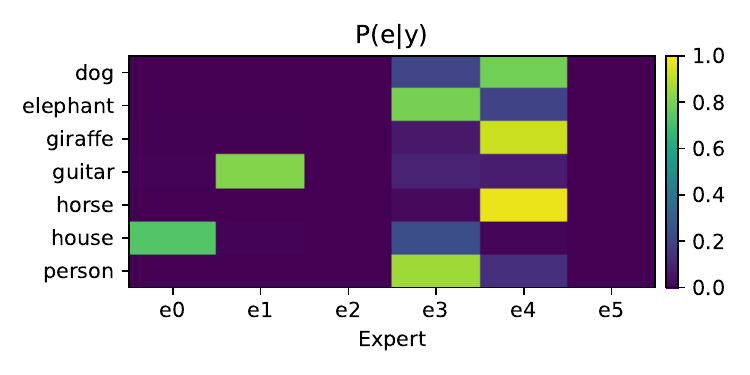}
  \end{subfigure}

  \medskip

  \begin{subfigure}[t]{0.48\linewidth}
    \centering
    \includegraphics[width=\linewidth]{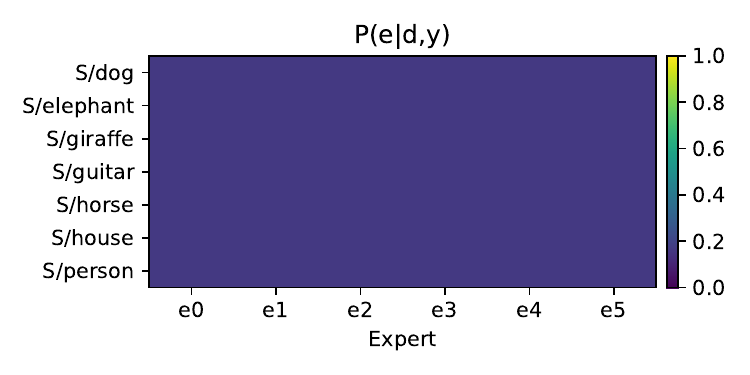}
  \end{subfigure}
  \hfill
  \begin{subfigure}[t]{0.48\linewidth}
    \centering
    \includegraphics[width=\linewidth]{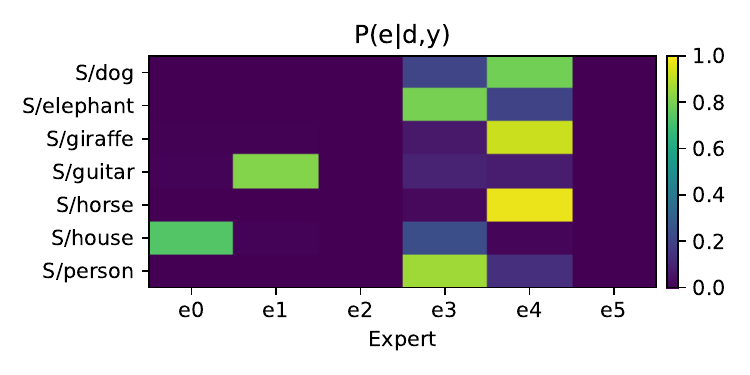}
  \end{subfigure}

  \caption{
  Routing diagnostics on the PACS Sketch domain. In each row, the left panel shows GMoE and the right panel shows MESSI. From top to bottom, the rows show \(P(e\vert d=S)\), \(P_S(e\vert y)\), and \(P(e\vert d=S,y)\). These correspond to domain-level expert usage, class-conditioned expert usage computed on domain \(S\), and class-conditioned routing within the Sketch domain, respectively.
  }
  \label{fig:routing_diag_S}
\end{figure*}

\clearpage
\subsection{Routing-Aware Alignment Controls}
\label{app:routing_aware_alignment}

The subset-conditioned invariance objective makes routing determine where alignment is applied. We test this mechanism by comparing MESSI with two controls. \textbf{Global-MoE} uses the same MoE architecture and auxiliary regularizers, but aligns all valid class-conditional source-domain pairs uniformly. \textbf{Random-Subset-MoE} uses the same sparsity level as MESSI, but selects aligned slots at random instead of using routing responsibility. All diagnostics are computed on held-out source-validation splits only; target-domain samples are not used.

For expert \(m\), source domain \(k\), and class \(c\), we compute the routing mass
\begin{align}
\rho_{k,c}^{(m)}
=
\frac{1}{|\mathcal D_{k,c}|}
\sum_{(x,y)\in\mathcal D_k,\,y=c}
\pi_m(x).
\end{align}
This induces a pair responsibility for each class-conditional domain pair:
\begin{align}
a_{ijc}^{(m)}
=
\sigma(\alpha\rho_{i,c}^{(m)})
\sigma(\alpha\rho_{j,c}^{(m)}),
\qquad i<j.
\end{align}
We call each tuple \((m,i,j,c)\) a \emph{slot}. A slot specifies which expert, domain pair, and class are considered for alignment.

\textbf{Responsibility similarity.}
For each method \(A\), we flatten the responsibilities \(\{a_{ijc}^{(m,A)}\}\) over all common valid slots into a vector \(\mathbf a_A\). We compare two methods using the Pearson correlation
\begin{align}
r(A,B)=\mathrm{corr}(\mathbf a_A,\mathbf a_B).
\end{align}
We also compare the top-\(q\%\) selected slots. Let \(S_A\) be the set of slots selected by method \(A\). The Jaccard overlap is
\begin{align}
J(A,B)
=
\frac{|S_A\cap S_B|}{|S_A\cup S_B|}.
\end{align}
High \(r(A,B)\) and \(J(A,B)\) indicate that two methods select similar class-domain-expert slots.

\textbf{Alignment on selected slots.}
A selected-versus-non-selected gap can be misleading because high-responsibility slots may already be easier to align. We therefore evaluate all methods on the same set of MESSI-selected slots. Let \(S_{\textsc{M}}\) be the top-\(q\%\) slots selected by MESSI. For each method \(A\), we compute
\begin{align}
\Delta_A(S_{\textsc{M}})
=
\frac{1}{|S_{\textsc{M}}|}
\sum_{(m,i,j,c)\in S_{\textsc{M}}}
D\!\left(
\mathcal Z_{i,c}^{(m,A)},\,
\mathcal Z_{j,c}^{(m,A)}
\right),
\end{align}
where \(D(\cdot,\cdot)\) is the empirical energy distance between the two class-conditional expert-feature distributions. Lower values indicate better alignment on the slots selected by MESSI.

\begin{table}[h]
\caption{
Routing-aware alignment diagnostics. \(r(A,B)\) is the Pearson correlation between pair-responsibility vectors, and \(J(A,B)\) is the Jaccard overlap between top-\(q\%\) selected-slot sets. \(\Delta_A(S_{\textsc{M}})\) is the feature discrepancy of method \(A\) evaluated on MESSI-selected slots; lower is better. Ctrl. \(\Delta\) averages Global-MoE and Random-Subset-MoE. Diagnostics use held-out source-validation splits only.
}
\label{tab:routing_aware_alignment}
\centering
\small
\setlength{\tabcolsep}{3.5pt}
\renewcommand{\arraystretch}{1.05}
\begin{tabular}{lcccccccc}
\toprule
Dataset
& \(r\)(G,R)
& \(J\)(G,R)
& \(r\)(M,G)
& \(J\)(M,G)
& \(r\)(M,R)
& \(J\)(M,R)
& Ctrl. \(\Delta\downarrow\)
& MESSI \(\Delta\downarrow\) \\
\midrule
PACS
& \(+1.00\)
& \(1.00\)
& \(-0.20\)
& \(0.04\)
& \(-0.20\)
& \(0.04\)
& \(0.235 \pm 0.014\)
& \(\mathbf{0.096 \pm 0.015}\) \\

OfficeHome
& \(+1.00\)
& \(0.93\)
& \(-0.20\)
& \(0.11\)
& \(-0.20\)
& \(0.11\)
& \(0.289 \pm 0.004\)
& \(\mathbf{0.217 \pm 0.006}\) \\

TerraIncognita
& \(+1.00\)
& \(0.95\)
& \(-0.20\)
& \(0.13\)
& \(-0.20\)
& \(0.14\)
& \(0.250 \pm 0.011\)
& \(\mathbf{0.179 \pm 0.026}\) \\

DomainNet
& \(+1.00\)
& \(0.95\)
& \(-0.20\)
& \(0.11\)
& \(-0.20\)
& \(0.10\)
& \(0.252 \pm 0.004\)
& \(\mathbf{0.143 \pm 0.032}\) \\
\bottomrule
\end{tabular}
\end{table}

\textbf{Results.}
Table~\ref{tab:routing_aware_alignment} shows two trends. First, Global-MoE and Random-Subset-MoE induce nearly identical responsibility patterns, with \(r(\mathrm{G},\mathrm{R})=1.00\) and high Jaccard overlap across datasets. In contrast, MESSI has weak or negative correlation with both controls and much lower selected-slot overlap. Thus, routing-aware alignment changes which expert-domain-class slots are selected for invariance, rather than simply sparsifying a global alignment objective.

Second, MESSI achieves lower discrepancy on the same MESSI-selected slots. The reduction is consistent across PACS, OfficeHome, TerraIncognita, and DomainNet. This indicates that the slots selected by MESSI are not only different from the controls, but are also better aligned after training. These diagnostics support the intended mechanism: routing determines where invariance is enforced, and the resulting selected slots receive stronger class-conditional alignment.

\subsection{Pairwise-to-Global Invariance Sweep}
\label{app:lambda_sweep}

We study how the invariance weight affects alignment, domain information, and target performance. On a diagnostic PACS split, we sweep
\(\lambda_{\mathrm{inv}}\in\{0,10^{-3},10^{-2},10^{-1},1,10\}\)
and report three quantities: pairwise class-conditional discrepancy, class-conditional domain-probe accuracy, and target-domain accuracy.

Figure~\ref{fig:lambda_sweep} and Table~\ref{tab:lambda_sweep} show three trends. First, increasing \(\lambda_{\mathrm{inv}}\) steadily reduces pairwise discrepancy, indicating stronger class-conditional alignment across source-domain pairs. Second, the class-conditional domain-probe accuracy also decreases, approaching the source-domain chance level at \(\lambda_{\mathrm{inv}}=10\). Thus, stronger invariance suppresses domain-predictive information in the learned representation. Third, target accuracy is not monotonic: it improves at moderate invariance strength, peaks at \(\lambda_{\mathrm{inv}}=10^{-2}\), and then degrades sharply. At \(\lambda_{\mathrm{inv}}=10\), target accuracy collapses even though pairwise discrepancy is nearly eliminated.

This sweep complements the theoretical result that enforcing pairwise conditional invariance over all domain pairs induces global conditional invariance. Empirically, however, stronger invariance is not always better. Moderate alignment improves generalization, whereas overly strong alignment removes label-relevant structure together with domain-specific variation. This supports the need for controlled, routing-conditioned alignment rather than uniformly enforcing global invariance.

\begin{figure}[h]
  \centering
  \includegraphics[width=\linewidth]{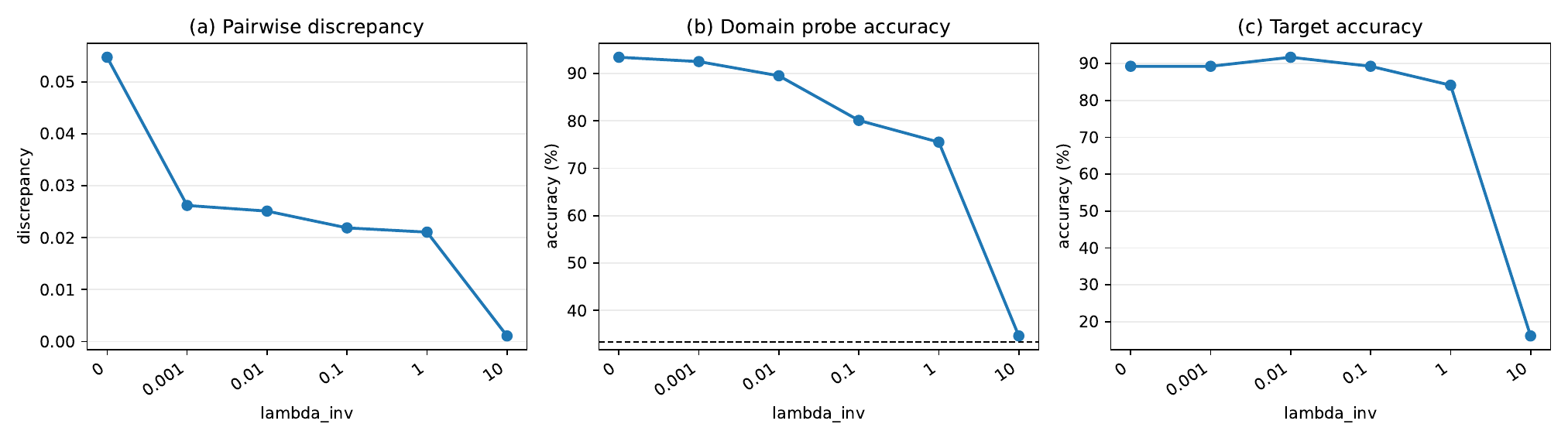}
  \caption{
    Pairwise-to-global invariance sweep on a PACS diagnostic split.
    We vary \(\lambda_{\mathrm{inv}}\) and report pairwise class-conditional discrepancy,
    class-conditional domain-probe accuracy, and target-domain accuracy.
    Increasing \(\lambda_{\mathrm{inv}}\) reduces pairwise discrepancy and domain predictability,
    but target accuracy peaks at moderate invariance strength and collapses under overly strong alignment.
    The dashed line in the domain-probe panel denotes the source-domain chance level.
    }
  \label{fig:lambda_sweep}
\end{figure}

\begin{table}[h]
\caption{
Numerical results for the pairwise-to-global invariance sweep on a PACS diagnostic split.
Moderate invariance improves target accuracy, with the best result at
\(\lambda_{\mathrm{inv}}=10^{-2}\).
In contrast, \(\lambda_{\mathrm{inv}}=10\) nearly eliminates pairwise discrepancy and
strongly suppresses domain predictability, but causes severe target-accuracy collapse.
}
\label{tab:lambda_sweep}
\centering
\small
\setlength{\tabcolsep}{6pt}
\begin{tabular}{rccc}
\toprule
\(\lambda_{\mathrm{inv}}\)
& Pairwise Disc. \(\downarrow\)
& Domain Probe Acc. \(\downarrow\)
& Target Acc. \(\uparrow\) \\
\midrule
0
& \(0.05475 \ms{0.0030}\)
& \(93.42 \ms{0.85}\)
& \(89.24 \ms{0.90}\) \\
\(10^{-3}\)
& \(0.02620 \ms{0.0022}\)
& \(92.51 \ms{0.95}\)
& \(89.24 \ms{0.85}\) \\
\(10^{-2}\)
& \(0.02511 \ms{0.0019}\)
& \(89.51 \ms{1.10}\)
& \(\textbf{91.69} \ms{\textbf{0.31}}\) \\
\(10^{-1}\)
& \(0.02188 \ms{0.0017}\)
& \(80.09 \ms{1.60}\)
& \(89.24 \ms{1.05}\) \\
1
& \(0.02106 \ms{0.0015}\)
& \(75.51 \ms{1.85}\)
& \(84.11 \ms{1.40}\) \\
10
& \(\textbf{0.00107} \ms{\textbf{0.0004}}\)
& $\mathbf{34.62} \ms{\mathbf{2.20}}$
& \(16.14 \ms{2.80}\) \\
\bottomrule
\end{tabular}
\end{table}

\section{Implementation Details}
\label{app:implementation}

\subsection{Architecture}
\label{app:architecture}

\textbf{Backbone.}
MESSI is evaluated with DeiT-Ti/16 and DeiT-S/16 encoders initialized from ImageNet-1K pretrained checkpoints~\citep{touvron2021training}. The DeiT-S/16 setting enables comparison with transformer-based and MoE-based baselines at the same encoder scale, whereas DeiT-Ti/16 tests whether subset-conditioned expert specialization remains effective with a lightweight encoder. In both settings, MESSI keeps the Transformer encoder unchanged and attaches a routing-based expert head to the final CLS representation. Thus, the backbone provides general visual features, while the expert head performs subset-conditioned specialization.

\textbf{Expert head.}
MESSI uses \(M=6\) experts. Let \(z=b_\phi(x)\) denote the final CLS representation produced by the encoder. Each expert maps \(z\) to a 384-dimensional representation with a two-layer MLP:
\begin{align}
h_m(z)=W_{m,2}\,\sigma(W_{m,1}z),
\end{align}
where \(\sigma\) is GELU. The router maps the same CLS representation to an expert distribution:
\begin{align}
\pi(x)=\mathrm{softmax}(W_{\mathrm r}z).
\end{align}
The final representation is the routed expert mixture
\begin{align}
h(x)=\sum_{m=1}^{M}\pi_m(x)h_m(z),
\end{align}
and the classifier is applied to \(h(x)\).

\textbf{Router input and domain usage.}
The router receives only the image representation \(z\). It does not receive the domain label. Domain labels are used only during training to compute source-domain alignment losses. At test time, MESSI requires neither domain labels nor target-domain data.

\subsection{Training Details}
\label{app:training}

\textbf{Backbone.}
All MESSI-S variants use a DeiT-Small/16, while MESSI-Ti uses DeiT-Ti/16 encoder initialized from an ImageNet-1K pretrained checkpoint~\citep{touvron2021training}. The encoder has model dimension \(384\). MESSI keeps the Transformer encoder unchanged and attaches a 6-expert mixture head to the final CLS representation, with expert output dimension \(384\).

\textbf{Optimizer and schedule.}
We train with Adam~\citep{kingma2014adam} and use a constant learning rate. The learning rate, weight decay, and number of training iterations are dataset-specific. These values are tuned once on source-validation splits following the DomainBed protocol and then fixed for all final runs:
\begin{center}
\small
\begin{tabular}{lccc}
\toprule
Dataset & Learning rate & Weight decay & Iterations \\
\midrule
PACS             & \(3\times10^{-5}\) & \(10^{-6}\) & 5{,}000  \\
OfficeHome       & \(1\times10^{-5}\) & \(10^{-6}\) & 10{,}000 \\
TerraIncognita   & \(5\times10^{-5}\) & \(10^{-4}\) & 10{,}000 \\
DomainNet        & \(5\times10^{-5}\) & \(0\)       & 15{,}000 \\
iWildCam (WILDS) & \(3\times10^{-5}\) & \(0\)       & 150{,}000 \\
\bottomrule
\end{tabular}
\end{center}

\textbf{Mini-batch construction.}
For PACS, OfficeHome, TerraIncognita, and DomainNet, we follow the leave-one-domain-out protocol. Each minibatch contains \(32\) images per source domain, giving total batch sizes of \(96\) for PACS, OfficeHome, and TerraIncognita, and \(160\) for DomainNet. For iWildCam, we treat camera-trap locations as domains. At each step, we sample \(K_{\mathrm{loc}}=4\) source locations and draw \(K_{\mathrm{img}}=8\) images from each location, forming a minibatch of \(32\) images. These active locations define the domain pairs used by \(\mathcal L_{\mathtt{ssi}}\).

\textbf{Augmentation.}
Training images use the following augmentation pipeline:
\begin{itemize}
    \item \texttt{RandomResizedCrop(224, scale=(0.7,1.0))},
    \item \texttt{RandomHorizontalFlip},
    \item \texttt{ColorJitter(0.3, 0.3, 0.3, 0.3)},
    \item \texttt{RandomGrayscale(p=0.1)}.
\end{itemize}
Images are then normalized with ImageNet statistics,
\(\mu=(0.485,0.456,0.406)\) and
\(\sigma=(0.229,0.224,0.225)\).
Validation and test images are resized to \(224\times224\) and normalized without augmentation.

\textbf{Model selection.}
For PACS, OfficeHome, TerraIncognita, and DomainNet, we use the DomainBed training-domain validation criterion. A stratified \(20\%\) split of each source domain is reserved for validation, and we select the checkpoint with the highest mean source-validation accuracy. Target-domain data are not used for training, hyperparameter tuning, or model selection. For iWildCam, we follow the WILDS protocol and select checkpoints using the official OOD validation split; final performance is reported on the OOD test split.

\subsection{Hyperparameters}
\label{app:hyperparameters}

All five datasets share the same MESSI objective weights and routing
configuration. We denote the sub-invariance loss as $\mathcal L_{\mathtt{ssi}}$
in the main method section; in the codebase the corresponding weight is
called \texttt{lambda\_inv}, i.e.\ $\lambda_{\mathtt{ssi}}\equiv\texttt{lambda\_inv}$.

\begin{center}
\small
\begin{tabular}{lc}
\toprule
Hyperparameter & Value \\
\midrule
$\lambda_{\mathtt{ssi}}$  & $0.01$ \\
$\lambda_{\mathrm{sp}}$   & $0.02$ \\
$\lambda_{\mathrm{bal}}$  & $0.02$ \\
$\lambda_{\mathrm{div}}$  & $0.02$ \\
Routing softmax temperature $\alpha$ & $4.0$ \\
Number of experts $M$ & $6$ \\
\bottomrule
\end{tabular}
\end{center}

\subsection{Notation and Hyperparameter Summary}
\label{app:notation_hparams}

\begin{table}[h]
\centering
\small
\caption{
Summary of notation used by MESSI. Domain indices refer to source domains in
one leave-one-domain-out training run. Target-domain samples are not used when
computing training losses.
}
\label{tab:notation}
\setlength{\tabcolsep}{4pt}
\renewcommand{\arraystretch}{1.08}
\begin{tabular}{ll}
\toprule
Symbol & Meaning \\
\midrule
\(K_s\) & Number of source domains in the current DG split \\
\(M\) & Number of experts \\
\(c\in\mathcal Y\) & Class index \\
\(i,j\in\{1,\ldots,K_s\}\) & Source-domain indices \\
\(b_\phi(x)\) & Shared encoder representation \\
\(h_m(b_\phi(x))\) & Representation produced by expert \(m\) \\
\(\pi_m(x)\) & Router probability assigned to expert \(m\) \\
\(\rho^{(m)}_{i,c}\) & Mean routing mass for expert \(m\), domain \(i\), class \(c\) \\
\(a^{(m)}_{ijc}\) & Routing-induced weight for aligning domains \(i,j\), class \(c\), expert \(m\) \\
\(\mathcal Z^{(m)}_{i,c}\) & Minibatch expert features from domain \(i\), class \(c\) \\
\(\mathcal C_B\) & Classes present in the current source minibatch \\
\(\mathcal P_B(c)\) & Source-domain pairs where class \(c\) appears in both domains \\
\(\mathcal L_{\mathtt{ssi}}\) & Subset-conditioned invariance loss \\
\(\mathcal L_{\mathtt{sp}}\) & Routing sparsity loss \\
\(\mathcal L_{\mathtt{bal}}\) & Expert load-balancing loss \\
\(\mathcal L_{\mathtt{div}}\) & Expert diversity loss \\
\bottomrule
\end{tabular}
\end{table}

\begin{table}[h]
\centering
\small
\caption{
Summary of implementation hyperparameters used by MESSI.
}
\label{tab:hparams}
\setlength{\tabcolsep}{4pt}
\renewcommand{\arraystretch}{1.08}
\begin{tabular}{ll}
\toprule
Hyperparameter & Role \\
\midrule
\(M\) & Number of experts \\
\(\alpha\) & Sharpness of routing-induced pair weights \(a^{(m)}_{ijc}\) \\
\(\epsilon_{\mathrm{OT}}\) & Entropic regularization coefficient in Sinkhorn OT \\
\(T_{\mathrm{OT}}\) & Number of Sinkhorn iterations \\
\(\lambda_{\mathtt{ssi}}\) & Weight of subset-conditioned invariance loss \\
\(\lambda_{\mathtt{sp}}\) & Weight of routing sparsity loss \\
\(\lambda_{\mathtt{bal}}\) & Weight of load-balancing loss \\
\(\lambda_{\mathtt{div}}\) & Weight of expert diversity loss \\
\(\beta_{\mathrm{bal}}\) & EMA coefficient for dataset-level routing load, if EMA balancing is used \\
\bottomrule
\end{tabular}
\end{table}

\subsection{Baseline Implementations}
\label{app:baseline_implementations}

We compare MESSI against three groups of baselines. The first group contains standard DomainBed baselines implemented with a ResNet-50 \citep{he2016deep} backbone. ERM serves as the empirical risk minimization reference. IRM~\citep{arjovsky2019invariant} represents invariant risk minimization. MMD~\citep{Li2018DomainGW} and CORAL~\citep{Sun2016DeepCC} represent distribution-alignment methods, where MMD matches kernel mean embeddings and CORAL matches second-order feature statistics. DANN~\citep{ganin2016domain} and CDANN~\citep{Li2018DeepDG} represent adversarial domain-alignment methods, with CDANN using class-conditional domain discrimination. MixStyle~\citep{Zhou2021DomainGW} represents feature-statistics-based data augmentation. Fish~\citep{Shi2021GradientMF} represents gradient-matching-based DG, and SWAD~\citep{Cha2021SWADDG} represents flat-minima-based model selection and weight averaging. Together, these methods form the conventional DomainBed reference set across ERM, invariant learning, feature alignment, adversarial alignment, augmentation, gradient matching, and flat-minima optimization. Unless otherwise stated, we use the DomainBed protocol and the reported or reproduced DomainBed-compatible implementations under the same train-domain validation model-selection rule.

The second group contains recent optimization and architecture baselines.
SAGM~\citep{Wang2023SharpnessAwareGM} represents sharpness-aware optimization
for DG. GMoE~\citep{Li2022SparseMA} represents sparse mixture-of-experts
architectures for domain generalization, and GMoE+SAGM combines the GMoE
backbone with the SAGM optimizer. LFME~\citep{Chen2024LFMEAS} is included as
a recent expert-based DG method based on multi-expert learning. These baselines
test whether the gains of MESSI can be explained by stronger optimization,
larger model capacity, or generic expert specialization.

The third group contains controlled MoE variants used for mechanism analysis.
These variants keep the MESSI backbone, number of experts, routing module,
training budget, and model-selection criterion fixed, while changing only the
training objective. This group includes classification-only MoE, globally
aligned MoE, random-subset-aligned MoE, and leave-one-out variants that remove
individual MESSI losses. These controls isolate the effect of routing-conditioned
subset alignment from MoE capacity, global alignment, random sparse alignment,
and auxiliary regularization.

\subsection{Model Size}
\label{app:model_size}

\begin{table}[h]
\centering
\footnotesize
\caption{
Trainable parameter counts in millions, excluding the final dataset-specific
classifier. Enc. denotes shared encoder parameters. MoE/Head denotes additional
expert, router, and method-specific head parameters. Values marked by
\(\approx\) are computed from the implemented architecture or from reported
model-size differences.
}
\label{tab:model_size}
\setlength{\tabcolsep}{4pt}
\renewcommand{\arraystretch}{1.08}
\begin{tabular*}{0.98\linewidth}{@{\extracolsep{\fill}}llccc@{}}
\toprule
Method & Backbone & Enc. & MoE/Head & Total \\
\midrule
ERM
& ResNet-50
& 25.60
& 0
& 25.60 \\

SAGM
& ResNet-50
& 25.60
& 0
& 25.60 \\

LFME
& ResNet-50
& 25.60
& 0
& 25.60 \\

GMoE
& DeiT-S/16
& 21.70
& \(\approx 12.10\)
& \(\approx 33.80\) \\

GMoE + SAGM
& DeiT-S/16
& 21.70
& \(\approx 12.10\)
& \(\approx 33.80\) \\

\rowcolor{gray!10}
MESSI-S
& DeiT-S/16
& 21.70
& \(\approx 7.09\)
& \(\approx 28.79\) \\

\rowcolor{gray!10}
MESSI-Ti
& DeiT-Ti/16
& 5.70
& \(\approx 1.78\)
& \(\approx 7.48\) \\
\bottomrule
\end{tabular*}
\end{table}

We exclude the final classifier because its size depends on the number of
classes and is small relative to the encoder and expert modules. SAGM adds no
trainable module, so it has the same parameter count as its base architecture;
for the same reason, GMoE+SAGM has the same size as GMoE. LFME is listed with
its ResNet-50 inference backbone for reference, although its training-time
teacher/expert construction differs from a single MoE inference model. The
GMoE MoE/Head count is estimated from the reported difference between GMoE-S/16
and ViT-S/16. MESSI head counts are computed from our implementation.

\textbf{Model-size discussion.}
Table~\ref{tab:model_size} shows that the gains of MESSI-S are not explained by
a larger parameter budget. Under the DeiT-S/16 encoder scale, MESSI-S has
approximately \(28.79\)M trainable parameters, compared with approximately
\(33.80\)M for GMoE-S/16. Thus, MESSI-S uses about \(15\%\) fewer trainable
parameters than GMoE-S/16 while still outperforming GMoE and OMoE in
Table~\ref{tab:main}. Parameter count alone does not establish the mechanism;
therefore, Appendix~\ref{app:objective_ablations} provides same-architecture
ablations where the MoE capacity is fixed and only the objective terms are
changed.

MESSI-Ti gives a smaller operating point. With a DeiT-Ti/16 encoder, the model
has approximately \(7.48\)M trainable parameters, making it about \(3.85\times\)
smaller than MESSI-S and \(4.52\times\) smaller than GMoE-S/16. This variant is
intended for settings where a lower parameter footprint is preferred, while
retaining the same routing-conditioned subset-alignment design.

\section{Limitations}\label{sec:limitations}
MESSI assumes that predictive structure is shared only across subsets of domains and that the router can recover useful routing patterns from data. When source domains are few, class coverage is limited or imbalanced, or routing fails to specialize, the resulting class-conditional routing-weighted discrepancies may become noisy and provide only weak alignment signals.

Conversely, when domains already admit a strong globally invariant representation, the additional routing decomposition may introduce unnecessary complexity and reduce statistical efficiency relative to simpler global alignment methods.

The method also introduces additional computational overhead due to pairwise class-conditional discrepancy estimation across experts and domains, particularly when using OT-based alignment objectives. Finally, the latent subsets induced by routing are not identifiable in general, and the learned routing assignments may not always correspond to semantically meaningful domain structure.

\newpage
\end{document}